\documentclass[nohyperref]{article}

\usepackage{microtype}
\usepackage{graphicx}
\usepackage{booktabs} 

\usepackage{hyperref}



\usepackage[accepted]{icml2022}

\usepackage{amsmath}
\usepackage{amssymb}
\usepackage{mathtools}
\usepackage{amsthm}

\usepackage{url}            
\usepackage{amsfonts}       
\usepackage{nicefrac}       
\usepackage{xcolor}         
\usepackage{comment}
\usepackage{mathrsfs}
\usepackage{subcaption}

\usepackage[capitalize,noabbrev]{cleveref}

\theoremstyle{plain}
\newtheorem{theorem}{Theorem}[section]

\theoremstyle{definition}

\theoremstyle{remark}

\newcommand*{\defeq}{\stackrel{\text{def}}{=}}

\usepackage[textsize=tiny]{todonotes}

\icmltitlerunning{Towards Practical Credit Assignment for Deep Reinforcement Learning}

\begin{document}

\twocolumn[
\icmltitle{Towards Practical Credit Assignment for Deep Reinforcement Learning}



\icmlsetsymbol{equal}{*}

\begin{icmlauthorlist}
\icmlauthor{Vyacheslav Alipov}{yandex}
\icmlauthor{Riley Simmons-Edler}{saicny}
\icmlauthor{Nikita Putintsev}{yandex}
\icmlauthor{Pavel Kalinin}{yandex}
\icmlauthor{Dmitry Vetrov}{nat,airi}
\end{icmlauthorlist}

\icmlaffiliation{yandex}{Yandex}
\icmlaffiliation{saicny}{Samsung AI Center NYC}
\icmlaffiliation{airi}{Artificial Intelligence Research Institute (AIRI)}
\icmlaffiliation{nat}{National Research University Higher School of Economics}

\icmlcorrespondingauthor{Vyacheslav Alipov}{\texttt{vyacheslav.alipov@gmail.com}}
\icmlcorrespondingauthor{Riley Simmons-Edler}{\texttt{rileys@cs.princeton.edu}}

\icmlkeywords{Machine Learning, Reinforcement Learning, Credit Assignment, Deep Reinforcement Learning}

\vskip 0.3in
]



\printAffiliationsAndNotice{}  

%


\begin{abstract}
\textit{Credit assignment} is a fundamental problem in reinforcement learning, the problem of measuring an action’s influence on future rewards. Explicit credit assignment methods have the potential to boost the performance of RL algorithms on many tasks, but thus far remain impractical for general use. Recently, a family of methods called Hindsight Credit Assignment (HCA) was proposed, which explicitly assign credit to actions in hindsight based on the probability of the action having led to an observed outcome. This approach has appealing properties, but remains a largely theoretical idea applicable to a limited set of tabular RL tasks. Moreover, it is unclear how to extend HCA to deep RL environments. In this work, we explore the use of HCA-style credit in a deep RL context.
We first describe the limitations of existing HCA algorithms in deep RL that lead to their poor performance or complete lack of training, then propose several theoretically-justified modifications to overcome them.
We explore the quantitative and qualitative effects of the resulting algorithm on the Arcade Learning Environment (ALE) benchmark, and observe that it improves performance over Advantage Actor-Critic (A2C) on many games where non-trivial credit assignment is necessary to achieve high scores and where hindsight probabilities can be accurately estimated.




\end{abstract}

\section{Introduction}
\label{sec:intro}

As evidenced by a number of high-profile examples in recent years, deep reinforcement learning has the potential to solve many real-world problems, from playing games \cite{alphago} to controlling robots \cite{openai2018dexterous} to regulating power usage in datacenters \cite{lazic2018data}. However, these success stories come at the price of extremely high data requirements and careful reward shaping to allow the RL agent to smoothly learn to perform the desired task, which is a prohibitive barrier for many other tasks of importance. 

One major contributing factor to this inefficiency is the difficulty of performing \textit{Credit Assignment}, the process of associating actions in the present with the rewards that they influence in the future. 
For example, as a human it can be hard to determine whether or not winning a baseball game was affected by what you ate that morning, or by the practice put in beforehand, or by wearing your lucky socks.
There are also examples among common RL benchmarks-- in the Atari game Seaquest the player needs to pick up and bring a diver to the surface to replenish their oxygen tanks within a time limit, and otherwise they will lose a life. A good credit assignment algorithm needs to associate picking up a diver and resurfacing with the additional points gained by using the extra oxygen to spend more time underwater.

While all RL algorithms can solve this sort of problem given sufficient training data, this can be prohibitively slow. For commonly used algorithms such as on-policy advantage actor-critic (A2C) \cite{mnih2016asynchronous} methods, it is difficult for the algorithm to discriminate between when actions are merely \textit{correlated} with future rewards and when they are \textit{causal}. This is because policy gradient algorithms such as A2C increase the probabilities of actions preceding a reward regardless of what effect those actions had on the probability of that reward being observed. In the limit this approach is guaranteed to converge to the optimal policy \cite{Sutton1998}, but if this prior assumption of correlation implying causation is inaccurate convergence may take a long time.

\subsection{Hindsight Credit Assignment}
\label{subsec:hca_intro}

Recently, \cite{harutyunyan2019hindsight} introduced Hindsight Credit Assignment (HCA), an algorithm for credit assignment. HCA uses information about future events to compute updates for the policy in hindsight. HCA only modifies the probabilities of actions that affect the likelihood of reaching rewarding states, and does not update actions that have no effect. 

This algorithm is theoretically well-motivated, and reduces the credit assignment problem to a supervised learning task -- to assign credit to a given action $a$, HCA learns a hindsight probability function $h_{\phi}(a| s_t, s_k) = P(a|s_t,s_k)$ given current state $s_t$ and future state $s_k$. This probability can be learned simply by minimizing the negative log-likelihood of actions given $s_t$ and $s_k$ pairs collected by a policy. It is then easy to weight updates to the policy based on how $h$ differs from the policy's foresight probability $\pi(a_t|s_t)$ -- the more the two probabilities differ, the bigger the effect the action had on the probability of observing the future state $s_k$, and the larger the corresponding policy update should be.

However, the existing HCA algorithm is still largely theoretical, has only been demonstrated on very simple tabular MDPs, and has recently been noted to have issues with certain types of MDPs \cite{mesnard2020counterfactual}. A number of practical questions remain -- How can we learn hindsight probabilities efficiently from limited data? How can we extend HCA to allow a value baseline? 

\subsection{Contributions}
\label{subsec:contributions}

In this paper, our goal is to derive practical deep RL credit assignment algorithms that build upon the hindsight credit assignment formalism. We describe several theoretically-justified extensions to HCA, and show that they allow for stable training in deep RL environments where HCA diverges or underperforms. We study the qualitative and quantitative behavior of the combined algorithm on games from the Arcade Learning Environment (ALE) and observe that it outperforms A2C on environments (like BeamRider) where assigning credit based on correlation is a poor prior.


We summarize our contributions as follows:
\begin{itemize}
    \item We probe the practical limitations of the existing hindsight credit assignment algorithm and the challenges of learning credit on common deep RL environments.
    \item We propose several extensions to HCA to address these issues and allow for improved training in deep RL environments.
    \item We characterize the credit assignment abilities of the resulting algorithm and validate their effects on the ALE benchmark
\end{itemize}

In the following sections, we first summarize other work on credit assignment in Section \ref{sec:related_work}, then explore the limitations of HCA and improve them in Section \ref{sec:methods}. We validate our algorithms in Section \ref{sec:experiments} on the ALE benchmark and conclude with some discussion of future research directions for credit assignment in Section \ref{sec:discussion}.

\section{Related Work}
\label{sec:related_work}

While the concept of credit assignment in RL is not new, an increasing amount of attention has been paid to \textit{explicit} credit assignment methods in recent years. While methods differ, the overall goal is to use additional information not normally used by the policy or value functions to compute more efficient policy updates. We will discuss some recent work on this topic here.

Counterfactual Credit Assignment \cite{mesnard2020counterfactual} performs credit assignment implicitly by providing the value function baseline with an additional vector which contains hindsight information about future states but avoids giving away information about the action taken to reach those states. CCA achieves unbiasedness by enforcing an information-theoretical bottleneck on the hindsight vector.
Zheng et al. \cite{zheng2021pairwise} propose a formulation of credit as a temporal weighting function of state, future state, and time horizon which is meta-learned. This approach uses meta-gradients to learn the credit function by differentiating through the policy gradient update. 
RUDDER \cite{arjona2018rudder} performs value transport to redistribute reward to preceding states which were important for reaching the rewarding state. This approach preserves the total magnitude of policy updates through redistribution across time, rather than eliminating updates for spurious actions.
Hung et al. \cite{hung2019optimizing} perform value transport by learning an attention model over past states, using it to redistribute reward similar to RUDDER. 
State Associative Learning \cite{raposo2021synthetic} also decomposes rewards into a sum over states by training a memory-augmented architecture with a carefully designed loss-function. 
Policy Gradients Incorporating the Future \cite{venuto2021policy} proposes a method that uses information from future states to improve policy gradients in the present, but does not try to solve the credit assignment problem directly.

Many recent online meta learning methods have conceptual similarity to credit assignment. Reward learning \cite{zheng2018learning} in particular has connections to value transport, as both involve a learned shaping of reward functions, but the quantities learned by meta-learning methods are open-ended, rather than a defined property of the environment with a specific formulation as in the case of explicit credit assignment. Xu et al. \cite{xu2018} proposed to learn hyperparameters for temporal credit assignment -- a discount $\gamma$ and bootstrapping parameter $\lambda$. 

\section{Methods}
\label{sec:methods}

Here, we describe the process and findings of our exploration of hindsight credit for deep RL. We begin with a review of RL notation and the credit formulation proposed in HCA \cite{harutyunyan2019hindsight}, which we base our work on, before discussing the shortcomings of this approach and our modifications that lead to a practical credit assignment algorithm.

\subsection{Background and Notation}
\label{sec:background}
Throughout this section we use capital letters for random variables, and lowercase letters for the values they take.

A Markov Decision Process (MDP) \cite{puterman2014markov} is a tuple $(\mathcal{S}, \mathcal{A}, p, r, \gamma)$. Given a current state $s\in\mathcal{S}$, an agent acting in the MDP takes an action $a\in\mathcal{A}$ and transitions to a next state $y\sim p(\cdot|s,a)$, receiving reward $r(s,a)$ in the process. The agent starts at an initial state $s_0$ and will act according to a policy $\pi$, i.e. $a\sim\pi(a|s)$, producing trajectories of states, actions, and rewards $\tau=(S_t,A_t,R_t)_t$, 
while seeking to maximize the expected discounted return $V^\pi(s_0)=\mathbb{E}_{\tau|s_0}[G_0]=\mathbb{E}\left[\sum_{k\geq0}\gamma^k R_k\right]$, where $G_t$ is the discounted return starting from state $s_t$.

While there exists a wide range of algorithms for deriving and improving a policy, in this work we are interested in policy gradient algorithms in particular, which improve a policy $\pi_\theta$ with parameters $\theta$ in the direction of the gradient of the value function $V^\pi$ \cite{sutton1999policy}:
\begin{equation}
    \label{eq:pg}
    \nabla_\theta V^{\pi_\theta}(s_0) = E_{\tau|s_0}\left[\sum_{t\geq0}\gamma^t\nabla_\theta\log\pi_\theta(A_t|S_t)G_t\right]
\end{equation}

Practical algorithms such as REINFORCE \cite{williams1992simple} approximate $G_t$ using an $T$-step truncated return $G_t\approx\sum_{k=t}^{t+T-1}\gamma^{k-t}R_k + \gamma^T V(S_{t+T})$.

\subsection{Hindsight Credit Assignment Preliminaries}
\label{subsec:hca_prelims}

HCA is a family of algorithms which modify the policy gradient update in Equation \eqref{eq:pg} through a credit function $C(a|s_t,F_t(\tau))$ that employs hindsight information $F_t(\tau)$, i.e. a concrete outcome from the trajectory $\tau$ following state $s_t$, to perform explicit credit assignment of obtained rewards to past actions.

While HCA proposes several possible quantities to use as $F_t(\tau)$, their \textit{State HCA} formulation, where $F_t(\tau)=s_k$ with $k > t$ is of particular interest as it allows for fine-grained credit assignment of each reward to each of the actions preceding it. In addition, \textit{State HCA} is able to assign credit to actions not actually taken, and can update them counter-factually. In this case, \cite{harutyunyan2019hindsight} propose the following update
 \begin{align}
     \label{eq:hca_update}
     &\nabla_\theta V^{\pi_\theta}(s_0) = E_{\tau|s_0}\left[ \sum_{t\geq0} \gamma^t\sum_a \nabla_\theta\log\pi_\theta(a|S_t)G^{C}_{t,a} \right] \nonumber\\
     &= E_{\tau|s_0}\left[ \sum_{t\geq0} \gamma^t\sum_a \nabla_\theta\log\pi_\theta(a|S_t) \left(\pi_\theta(a|S_t)r(S_t,a) \vphantom{\sum_{k>t}}\right. \right. \nonumber \\
     &\qquad \left. \left. + \sum_{k>t}\gamma^{k-t}C(a|S_t,S_k)R_k \right)\right],
 \end{align}
 where $C(a|s_t,s_k)=h(a|s_t,s_k)\defeq P(A_t=a|S_t=s_t,S_k=s_k)$. Intuitively $h(a|s_t,s_k)$ quantifies the relevance of action $a$ to the future state $s_k$ and thus to achieving future reward $R_k$. If $a$ is not relevant to reaching $s_k$ then $h(a|s_t,s_k)=\pi(a|s_t)$ since there is no additional information in $s_k$. If $a$ is instrumental in reaching $s_k$ then $h(a|s_t,s_k)>\pi(a|s_t)$ and, vice versa, if $a$ detracts from reaching $s_k$, $h(a|s_t,s_k)<\pi(a|s_t)$. We refer the reader to the reference paper \cite{harutyunyan2019hindsight} for additional intuition and formal analysis.
 
 Practical implementation of Equation \eqref{eq:hca_update} uses a $T$-step truncated return with bootstrapping:
 \begin{align}
     \label{eq:hca_approx_return}
     G^{C}_{t,a} \approx &\pi_\theta(a|S_t) \hat r (S_t, a) + \sum_{k=t+1}^{t+T-1} \gamma^{k-t}C(a|S_t,S_k)R_k \nonumber\\ 
     & + \gamma^{T}C(a|S_t,S_{t+T})V_\theta(S_{t+T}),
 \end{align}
 where $\hat r$ is an additional model trained to estimate immediate rewards, $V_\theta$ is an approximation of $V^{\pi_\theta}$, and $h_\phi(a|s_t,s_k)$ is a parametric model trained via cross-entropy to predict $A_t$.
 
 \cite{harutyunyan2019hindsight} prove that Equation \eqref{eq:hca_update} in the case where the credit function is perfectly accurate will be unbiased, and thus that the policy converges to the same optima as REINFORCE \cite{williams1992simple}. They further demonstrate that this algorithm converges faster than REINFORCE in several illustrative small tabular MDPs where credit assignment is critical for fast convergence.

\subsection{Deep Credit Assignment}
\label{subsec:deep_credit}

While the above HCA formulation has been shown to work in simple tabular environments \cite{harutyunyan2019hindsight} and on a set of illustrative problems \cite{mesnard2020counterfactual}, it's applicability to more complex MDPs in a deep RL framework remains unknown. Our objective in this section is to identify problems that arise when HCA is implemented using deep neural networks for function approximation (we refer to this version of the algorithm as Deep HCA) and to propose solutions to them.

For our experiments with Deep HCA, we used the Arcade Learning Environment (ALE) \cite{bellemare2013arcade} benchmark via the OpenAI Gym interface \cite{brockman2016openai}, as this benchmark is extremely common and thus represents the sort of tasks we want a practical algorithm to be able to handle. 

We describe our exploration of Deep HCA in the following sections. We first briefly describe Deep HCA implementation specifics and models in Section \ref{subsubsec:methods_impl}. Next, we identify some convergence problems for Deep HCA in Section \ref{subsubsec:methods_deep_hca_problems}, and propose algorithmic modifications to prevent them in Sections \ref{subsubsec:policy_prior} and \ref{subsubsec:advantage_credit}. 

\subsubsection{Deep HCA Models}
\label{subsubsec:methods_impl}

We implemented Deep HCA on top of a publicly available neural-network-based actor-critic baseline implementation \cite{pytorchrl} available under MIT licence to allow for direct comparisons between A2C and credit assignment methods.

We use the AtariCNN architecture first proposed by DQN \cite{mnih2015human} for the agent's policy $\pi_\theta(a|s)$, with an additional value head for $V^{\pi_\theta}$. To estimate hindsight probabilities, we train a separate AtariCNN network $h_{\phi}(a|s_t,s_k)$ which takes two concatenated states as input and predicts the action selected by the policy using softmax cross entropy.
    
In the original State HCA formulation in Equation \eqref{eq:hca_update} immediate reward is unconstrained by credit, which is sub-optimal for deep RL problems with delayed rewards, as immediate rewards may be unrelated to the current action. Further, in considering how the design of the State HCA algorithm translates when neural networks are used, the immediate reward model $\hat{r}$ in Equation \eqref{eq:hca_approx_return} sticks out as an added source of complexity. Thus, we propose the following simplified update rule that credits immediate reward using the credit function, eliminating the need for an additional reward model:
\begin{align}
    \nabla_\theta V^{\pi_\theta}(s_0)
    = E_{\tau|s_0}\left[ \sum_{t\geq0} \gamma^t\sum_a \nabla_\theta\log\pi_\theta(a|S_t) \right. \nonumber \\
        \qquad\left. \sum_{k\geq t}\gamma^{k-t}C(a|S_t,S_{k+1})R_k,
    \right],
    \label{eq:deep_hca_update}
\end{align}
where $C(a|s_t,s_k)=h_\phi(a|s_t,s_k)$.

This update arises if we consider reward as a function of the next state, rather than of the current state-action pair, or if the hindsight distribution is explicitly conditioned on reward. We further elaborate on this in Appendix \ref{app:credit_all_rewards}.

\subsubsection{Deep HCA Convergence Issues}
\label{subsubsec:methods_deep_hca_problems}

In our initial testing, we found that a straightforward implementation of Deep HCA barely makes any progress on ALE benchmark tasks. We identify the following reasons for Deep HCA's poor performance:
\begin{itemize}
    \item Slow training of the hindsight distribution $h_\phi(a|s_t,s_k)$ is a bottleneck for policy training.
    \item An imperfect and biased hindsight distribution $h_\phi(a|s_t,s_k)$ could lead to the agent's policy collapsing to a degenerate deterministic function in environments with negative rewards.
    \item The lack of advantage limits effective policy training due to high variance.
\end{itemize}

In the following sections we propose algorithmic modifications to speed up the training of the credit function and alleviate policy collapse while incorporating advantage to the State HCA formulation.

\subsubsection{Credit Approximation using a Policy Prior}
\label{subsubsec:policy_prior}

 From Equation \ref{eq:deep_hca_update} it follows that learning progress starts with the hindsight distribution $h_\phi(a|s_t,s_k)$. Before this classifier learns meaningful credit relationships no rewards are credited to any of the agent's actions, or worse rewards are credited wrongfully. 
 Because of this, $h_\phi$ needs to be able to rapidly adapt to novel states encountered by the agent, and to quickly credit rare significant outcomes. However, $h_\phi$ can't be trained too aggressively because overfitting leads to heavily biased updates. Thus, policy training is bottlenecked by $h_\phi$.

However, we notice that the hindsight distribution could be rewritten using Bayes' rule to explicitly use the agent's policy as a prior, i.e. $P(A_t=a|S_t=s_t,S_k=s_k)\propto P(S_k=s_k|S_t=s_t,A_t=a_t)\pi(a|s_t)$. We propose to model $h_\phi$ by explicitly taking this prior into account as $h_\phi(a|s_t,s_k)\propto\exp(g_\phi(a,s_t,s_k))\pi_\theta(a|s_t)$, which results in the following equation:
\begin{equation}
    \label{eq:policy_prior}
    h_\phi(a|s_t,s_k) = \frac{\exp({g_{\phi}(a,s_t,s_k) + \log\pi_{\theta}(a|s_t))}}{\sum\limits_{a'\in \mathcal{A}}\exp({g_{\phi}(a',s_t,s_k) + \log\pi_{\theta}(a'|s_t))}},
\end{equation}
where $g_{\phi}(a,s_t,s_k)$ is a learned credit residual on the policy logits.

This parametrization serves several purposes. First, it speeds up training, as the credit residual only needs to change when $s_k$ provides information that $s_t$ doesn't. Secondly, it allows $h_\phi$ to rapidly take changes in the agent's policy into account without the needing to re-learn the prior from data. Finally, it biases $h_\phi$ to be close to the agent's policy at the time of initialization and effectively avoids spurious crediting of rewards for $(a, s_t, s_k)$ tuples until $g_\phi$ learns a useful residual for them.

In Figure \ref{fig:classifier_ablation} we demonstrate that when trained in parallel on trajectories from an A2C agent on BeamRider, the explicit prior parametrization is able to incorporate future information and improve predictions early in training when the agent is close to random, which is critical as agent learning depends on the credit classifier.
This confirms that an explicit prior parametrization is better at adapting to novel trajectories as the agent trains.

\begin{figure}
    \centering
    \includegraphics[width=0.5\linewidth]{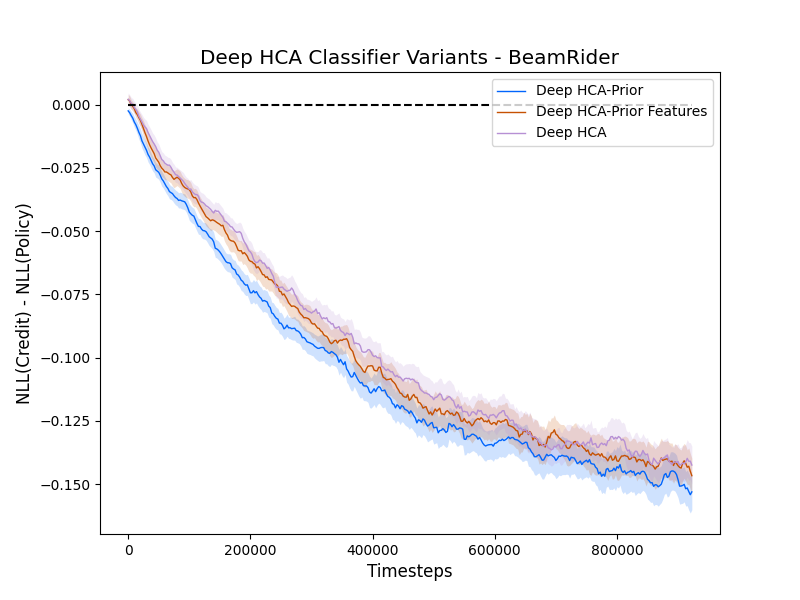}
    \caption{Comparison between different versions of the hindsight classifier $h_\phi(a|s_t,s_k)$ on BeamRider. The shaded region indicates the standard error across 15 training runs. Training curves for different variants of $h_\phi$ trained in parallel on trajectories sampled by an A2C agent have better NLL for predicting actions than the agent's policy.}
    \label{fig:classifier_ablation}
    \vspace{-.5cm}
\end{figure}

\subsubsection{Combining Credit and Advantage Via Bootstrapping}
\label{subsubsec:advantage_credit}

While testing the performance of Deep HCA, we discovered that State HCA suffers from policy collapse to a degenerate policy when returns can be negative (See training curves for Pong in Figure \ref{fig:hca_ablation}).

To illustrate this issue, let's consider the example of a scenario where the hindsight classifier is independent of future state, i.e. $h_\phi(a|s_t,s_k)=h_\phi(a|s_t)$, and doesn't change for a while. This could happen at the start of training or when encountering novel states. From Equation \ref{eq:deep_hca_update}, the update for state $s_t$ in this case would be $(\sum_{k\geq t}\gamma^{k-t}r_k)\sum_a h(a|s_t)\nabla_\theta\log\pi_\theta(a|s_t)=-G_t\nabla_\theta H(h_\phi,\pi_\theta)$, where $G_t$ is total return and $H(h_\phi,\pi_\theta)$ is the cross-entropy of the distribution $\pi_\theta$ relative to $h_\phi$. If $G_t$ is positive, this update minimizes cross-entropy, bringing $\pi_\theta$ closer to $h_\phi$. However, maximizing cross-entropy when $G_t$ is negative moves all the probability mass to $\arg\min_a h_\phi(a|s_t)$. We provide additional details on this phenomenon in Appendix \ref{app:neg_rewards}.

This property can be avoided if negative returns are transient throughout training for a given trajectory, as is the case for actor-critic algorithms. To achieve that in Deep HCA, we use a form of Potential Based Reward Shaping \cite{ng1999policy} with $V_\theta$ as a potential function, i.e. we substitute the original environment rewards $R_k$ with augmented rewards $\mathscr{A}_k = \gamma V_\theta(S_{k+1}) + R_k - V_\theta(S_k)$. This form of reward shaping doesn't change the optimal policy and introduces no additional bias while speeding up policy training and preventing the negative return collapse issue. The resulting alternative to the update rule in Equation \eqref{eq:deep_hca_update} is:
 \begin{align}
     \label{eq:deep_hca_adv_update}
     \nabla_\theta V^{\pi_\theta}(s_0)
     = E_{\tau|s_0}\left[ \sum_{t\geq0} \gamma^t\sum_a \nabla_\theta\log\pi_\theta(a|S_t) \right. \nonumber \\
     \qquad\left. \sum_{k\geq t}\gamma^{k-t}C(a|S_t,S_{k+1})\mathscr{A}_k
    \right],
 \end{align}
 where $T$-step truncated return approximation takes the simple form
 \begin{equation}
     \label{eq:deep_hca_adv_return}
     G^{C}_{t,a} \approx \sum_{k=t}^{t+T-1} \gamma^{k-t}C(a|S_t,S_{k+1})\mathscr{A}_k.
 \end{equation}
 Compared to Equation \eqref{eq:hca_approx_return} there is no need for a trailing value function, as the expectation of the augmented returns is zero. Note that the $V_\theta$ used as a potential function still approximates the discounted sum of original rewards $R_k$.
 
 It is worth noting that in addition to mitigating the policy collapse problem, these augmented rewards $\mathscr{A}_k$ are simply 1-step bootstrapped advantages. Using this algorithm combines advantages with the State HCA formalism and gains benefits such as improved training stability.
 We combine this modification with our policy prior, and call the resulting algorithm ``HCA-Value'' (and ``Deep HCA-Value'' respectively). We show its improved performance and training stability in Figure \ref{fig:hca_ablation}.

An interesting consequence of this formulation is that A2C is a special case of Equation \eqref{eq:deep_hca_adv_update}, where the credit term is $C(a|s_t,s_{k+1}) = [A_t = a]$ ($[]$ is the Iverson bracket and $A_t$ is a sampled action). That is, A2C simply credits the sampled actions and nothing else-- The truncated return in Equation \eqref{eq:deep_hca_adv_return} becomes a telescoping sum of 1-step bootstrapped advantages $\mathscr{A}_k$, which sums to a $T$-step advantage $\sum_{k=t}^{t+T-1}\gamma^{k-t}R_k + \gamma^T V_\theta(S_{t+T}) - V_\theta(S_t)$ for the selected action and to zero otherwise.

This means Deep HCA-Value is a generalization of A2C where the credit term is learned rather than fixed, allowing for more expressive power at the cost of needing to learn the credit distribution $h_{\phi}(s_t,s_k)$.

\subsubsection{Limiting the Effects of Counterfactual Updates}
\label{subsubsec:hindsight_clipping}


Empirically, we found that in some environments it's helpful to clip the hindsight probabilities $h_\phi$ based on the corresponding policy probabilities:
\begin{equation}
    \label{eq:hindsight_clipping}
    \Tilde{h}_\phi(a|s_t,s_k) = \min(h_\phi(a|s_t,s_k), \pi_\theta(a|s_t) \cdot \lambda),
\end{equation}
where $\lambda$ is a hyperparameter limiting how much credit an action can receive compared to the policy's current estimate of its importance.

In some environments clipping can stabilize training (see Breakout in Figure \ref{fig:hca_ablation}) and lead to significant performance gains (see NameThisGame and other examples in Appendix \ref{app:full_atari}), but it can be detrimental to performance when Deep HCA-Value already performs well. Finding a good value for $\lambda$ is thankfully intuitive -- increase it until performance starts to deteriorate. Empirically, we found that $\lambda=3$ worked well on all the games we tested.

As we mentioned in Section \ref{subsubsec:advantage_credit} Deep HCA-Value generalizes A2C so giving too much credit to selected action cannot cause convergence issues-- Deep HCA-Value performs the same update as A2C in such cases. This means that performance gains from clipping hindsight can only come from limiting the effects of counterfactual updates.

We call this variant of Deep HCA-Value using clipped $\Tilde{h}_\phi(a|s_t,s_k)$ ``Deep HCA-Value-Clip''.

\section{Experiments}
\label{sec:experiments}

In this section, we explore the credit assignment abilities of Deep HCA-Value and validate their effects on the ALE.

We elected to test a broad swathe of 32 ALE games including those which are unlikely to benefit much from better credit assignment.
We plot the performance of Deep HCA-Value on a selection of informative games in Figure \ref{fig:hca_ablation} which we believe reflect the range of typical behaviors, with full training curves for all 32 in Appendix \ref{app:full_atari}.

For each configuration we train 3 replicates for 50 million environmental steps (200M frames). All deep RL experiments were run on a common code base, hyperparameters shared between methods were tuned on A2C and kept the same for all methods, with hyperparameters specific to Deep HCA tuned via parameter sweeps on multiple ALE games. 
Our large-scale experiments were run on a cluster composed of 32 NVidia A100 GPUs over the course of about 2 weeks.

\begin{figure}
    \centering
    \includegraphics[width=0.49\linewidth]{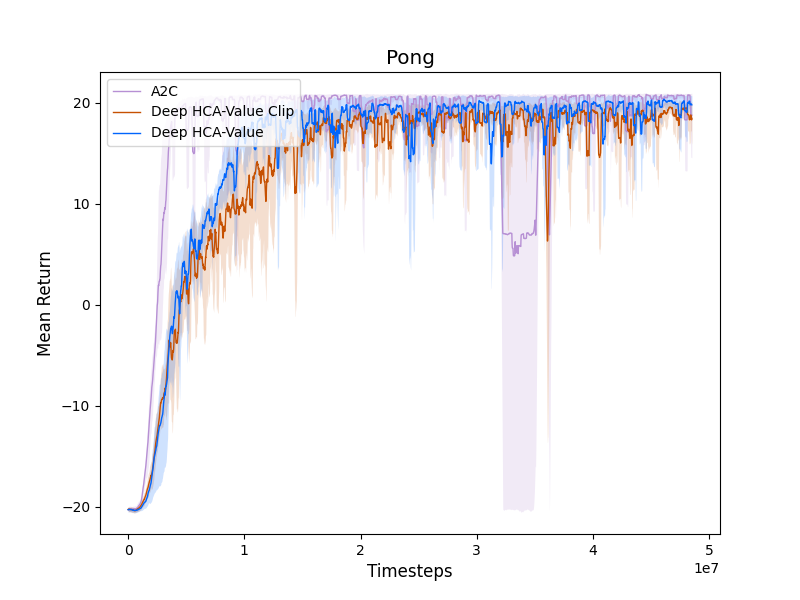} 
    \includegraphics[width=0.49\linewidth]{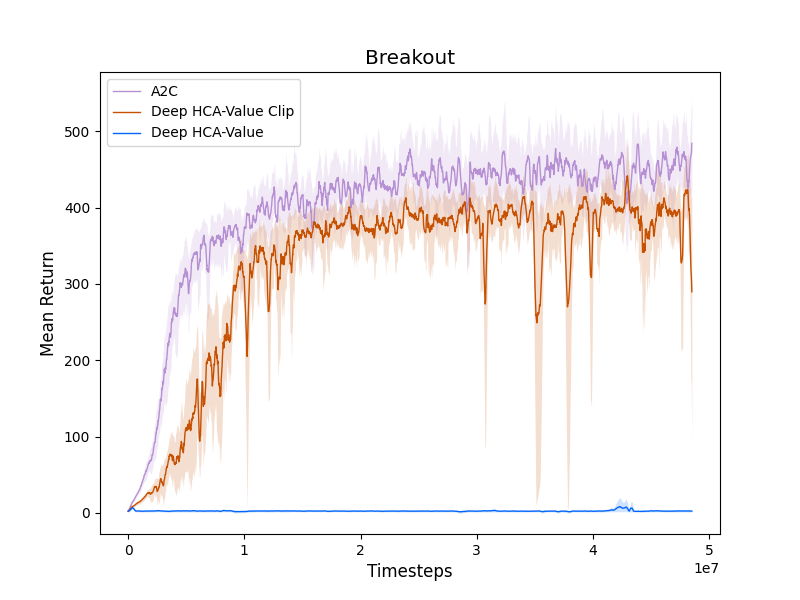} 
    \includegraphics[width=0.49\linewidth]{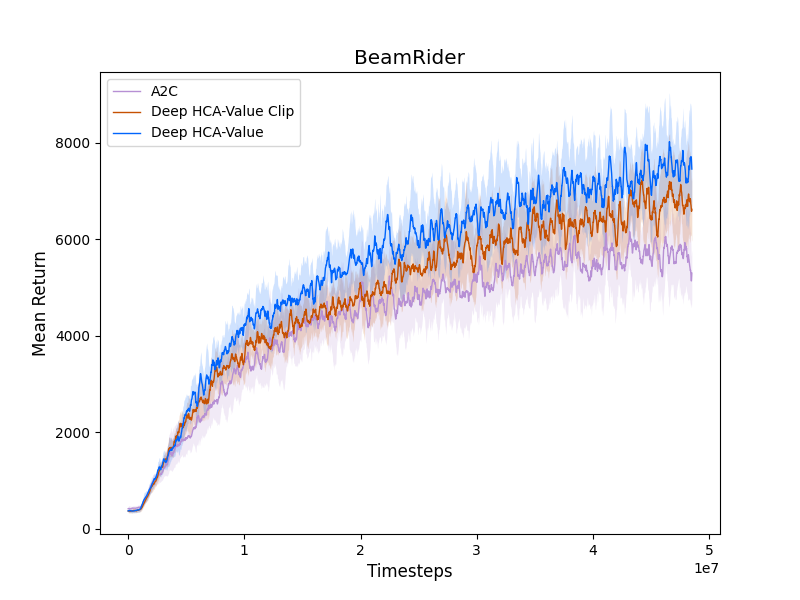} 
    \includegraphics[width=0.49\linewidth]{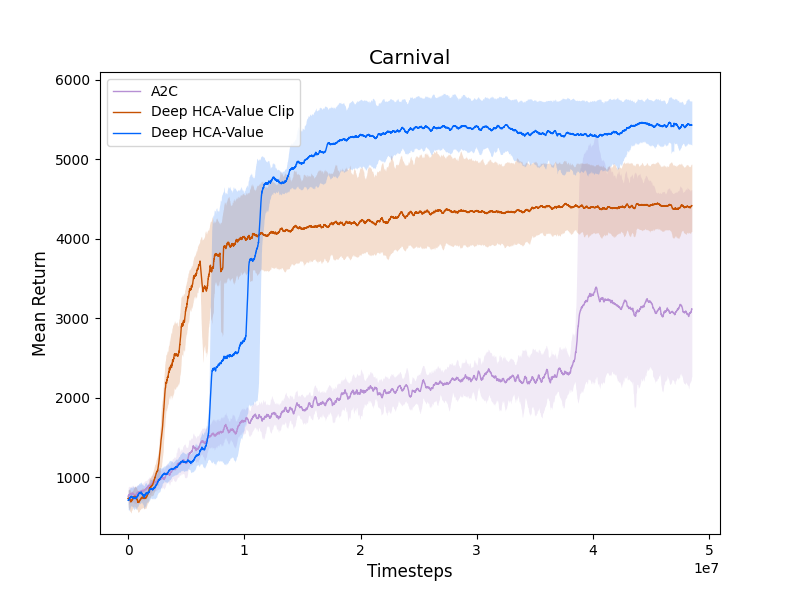}
    \includegraphics[width=0.49\linewidth]{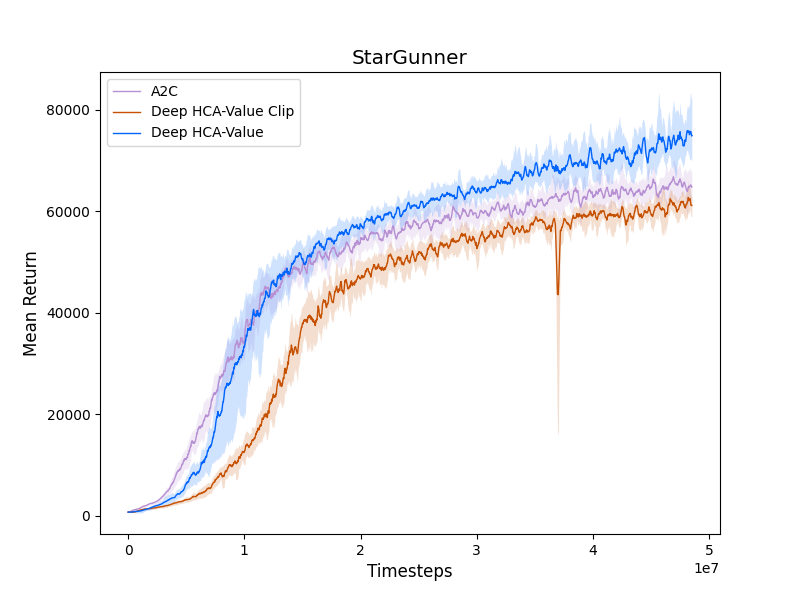}
    \includegraphics[width=0.49\linewidth]{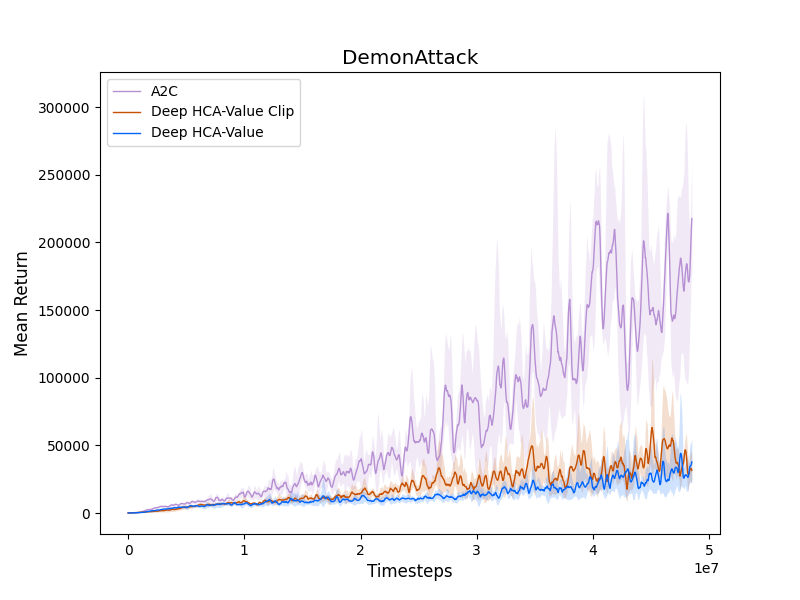}
    \caption{Training curves for our modifications to Deep HCA alongside reference algorithms. The shading shows min and max across 3 runs. Deep HCA-Value trains faster than A2C on BeamRider, Carnival, BeamRider and StarGunner but slower on DemonAttack and collapses on Breakout. Deep HCA-Value-Clip avoids collapse on Breakout).}
    \label{fig:hca_ablation}
    \vspace{-.8cm}
\end{figure}

From the results in Figure \ref{fig:hca_ablation} and Appendix \ref{app:full_atari}, we can see that Deep HCA-Value (with or without clipping) had better final performance than A2C on 10 environments, some (such as Carnival, Seaquest, NameThisGame, etc.) by large margins. In 8 games (Pong, Kangaroo, Amidar, etc.) Deep HCA-Value performed comparably to A2C. On a further 12 games Deep HCA-Value underperformed A2C, in some cases (such as DemonAttack) substantially. In 2 games (Asteroids and Atlantis) we did not observe Deep HCA-Value to improve its policy above random.

To understand these results, we explore what the credit classifier learned in environments where Deep HCA-Value overperforms or underperforms in Section \ref{subsec:what_has_credit_learned}. We further discuss the games that Deep HCA-Value performs well or poorly on and why in Appendix \ref{subsec:atari_results_explained}.

\subsection{What does the Credit Classifier Learn?}
\label{subsec:what_has_credit_learned}

As was mentioned concerning State HCA in Section \ref{subsec:hca_prelims}, Depending on the relative values of $h(a|s_t,s_{t+\Delta})$ and $\pi(a|s_t)$ there are three things credit assignment can do: 1) encourage an action when $h(a|s_t,s_{t+\Delta}) > \pi(a|s_t)$, 2) discourage an action when $h(a|s_t,s_{t+\Delta}) < \pi(a|s_t)$ and 3) make no update to an action when $h(a|s_t,s_{t+\Delta}) = \pi(a|s_t)$.

In order to take a broader look at Deep HCA-Value's behavior, we plotted the average negative log-likelihood (NLL) gains of the hindsight classifier $h_\phi(a|s_t,s_{t+\Delta})$ over the policy $\pi_\theta(a|s_t)$ when predicting the sampled action as a function of future time horizon $\Delta$, shown in Figure \ref{fig:beamrider_example}. Negative values indicate the credit classifier assigns higher probability to the sampled action than the policy (the first case), while positive values indicate lower probability (the second case), and 0 indicates equal probability (the third case). 


In Figure \ref{fig:nll_plots} we show NLL difference plots for four Atari games. Deep HCA-Value outperforms A2C on BeamRider and StarGunner, underperforms on DemonAttack, and fails to train without clipping on Breakout (see Figure \ref{fig:hca_ablation}). For these plots we use Deep HCA-Value Clip for Breakout and normal Deep HCA-Value for the others.

On average $h_\phi$ confidently guesses sampled actions that are close to the future state with a steep decline to almost random guessing after a certain horizon. This is unsurprising as in ALE the future is highly dependent on the nearest past and actions with long term consequences are rare.

In BeamRider, we notice that the effective credit horizon increases as the training progresses to a greater degree than in the other games. This means that as the agent converges and makes the future more dependent on its past actions the hindsight classifier is able to make use of further future states. This is in contrast with StarGunner, where credit assignment remains mostly short-term past early training.

In the two underperforming games, DemonAttack and Breakout, we notice a pattern of wide vertical lines where the hindsight classifier badly mispredicts sampled actions across all time horizons, causing wrong but confident counterfactual updates. This issue is particularly severe in Breakout at the start of the training, which leads to policy collapse for unclipped Deep HCA-Value that is prevented by clipping. We hypothesize that these severe mispredictions are caused by the classifier overfitting given limited and correlated data from the agent's rollouts on these games.

\begin{figure}
    \centering
    \hspace*{-1cm}\includegraphics[width=1.35\linewidth]{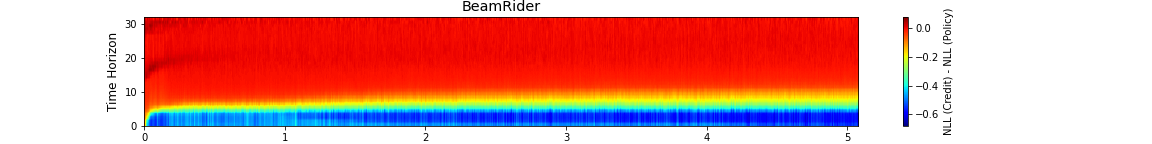}
    \hspace*{-1cm}\includegraphics[width=1.35\linewidth]{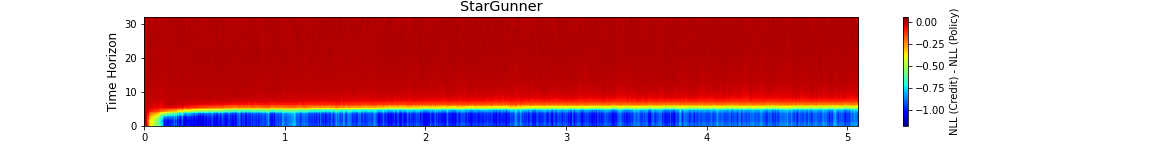}
    \hspace*{-1cm}\includegraphics[width=1.35\linewidth]{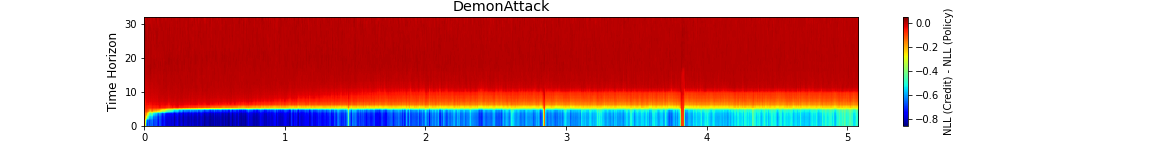}
    \hspace*{-1cm}\includegraphics[width=1.35\linewidth]{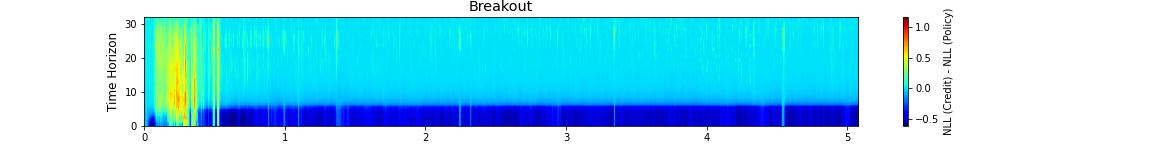}
    \caption{Average NLL difference between the credit classifier $h_\phi(a|s_t,s_{t+\Delta})$ and policy $\pi_\theta(a|s_t)$ when predicting sampled actions as a function of time horizon $\Delta$. The credit classifier predicts actions better given near-future states, and is close to the policy given far-future ones. In BeamRider the credit horizon grows during training, while in Breakout and DemonAttack severe mispredictions (indicated by vertical stripes of high NLL) hamper policy improvement.}
    \label{fig:nll_plots}
    \vspace{-.3cm}
\end{figure}


\subsubsection{Credit Versus N-Step Truncation}
\label{subsubsec:n_step}

Based on Figure \ref{fig:nll_plots}, the hindsight classifier $h_\phi(a|s_t,s_{t+\Delta})$ appears to mostly do two things: encourage sampled actions that are close to the rewarding state and give no credit to actions further away. Discouraging selected actions (and thus encouraging counterfactual actions) is relatively rare.

An important observation is that an extreme case of Deep HCA-Value is to encourage selected actions within some fixed time horizon $N$ with full weight, while giving no credit to actions outside that time horizon. Interestingly, this case is equivalent to A2C with N-step truncated advantages.

More formally, if we define the credit function as
$C(a|s_t,s_k) =
\begin{cases}
[A_t = a] & \text{if } k - t \leq N \\
\pi_\theta(a|s_t) & \text{otherwise}
\end{cases}$,
where $N$ is a fixed time horizon, then analogously to the connection to A2C in Section \ref{subsubsec:advantage_credit}, the truncated $T$-step return in Equation \eqref{eq:deep_hca_adv_return} becomes $[A_t = a]\sum_{i=t}^{t+N-1}\gamma^{i-t}\mathscr{A}_i
    + \pi_\theta(a|s_t)\sum_{k=t+N}^{t+N+T-1}\gamma^{k-t}\mathscr{A}_k$,
where the first term is the $N$-step advantage  (telescoping sum of 1-step advantages) for the selected action and zero otherwise, while the second term vanishes after plugging it into Equation \eqref{eq:deep_hca_adv_update} due to $\sum_a\pi_\theta(a|s)\nabla\log\pi_\theta(a|s)=0$.

To test whether the performance gains of Deep HCA-Value mostly stem from this extreme case we compare it against a variant of A2C that collects rollouts of the same length as Deep HCA-Value ($T=32$) but computes truncated advantages of length $N$ using a sliding window over trajectories. We use $N=5,10,15,20$ based on the results in Figure \ref{fig:nll_plots}.

In Figure \ref{fig:n_step} we observe that Deep HCA-Value outperforms all values of $N$ on BeamRider, suggesting that its advantages on this game stem from more sophisticated credit assignment-- either from counterfactuals or from learning to vary the effective rollout length dynamically. However on StarGunner Deep HCA-Value's result is matched by the $N=5$ configuration, confirming our intuition from Figure \ref{fig:nll_plots} that the credit classifier struggles to learn meaningful credits past a fixed time horizon for this game. We present comparisons on additional games in Appendix \ref{app:n_step_full}.

\begin{figure}
    \centering
    \includegraphics[width=0.49\linewidth]{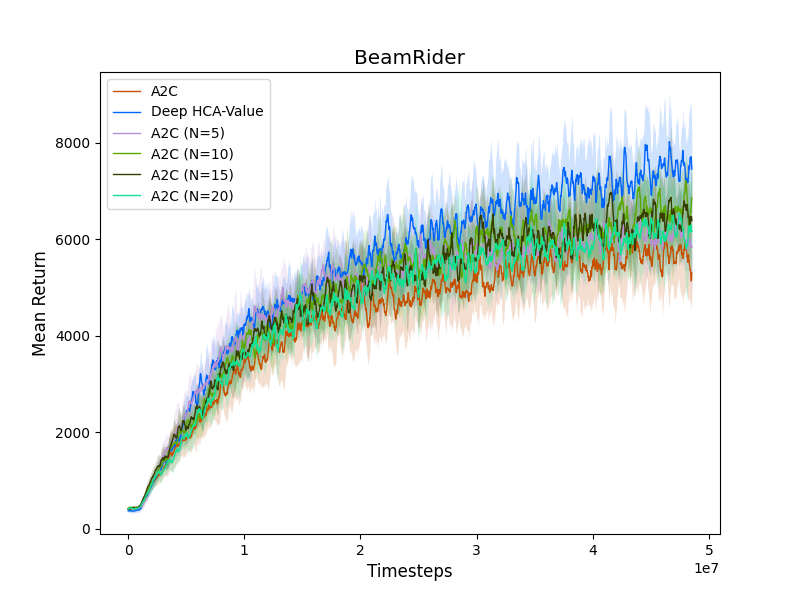} 
    \includegraphics[width=0.49\linewidth]{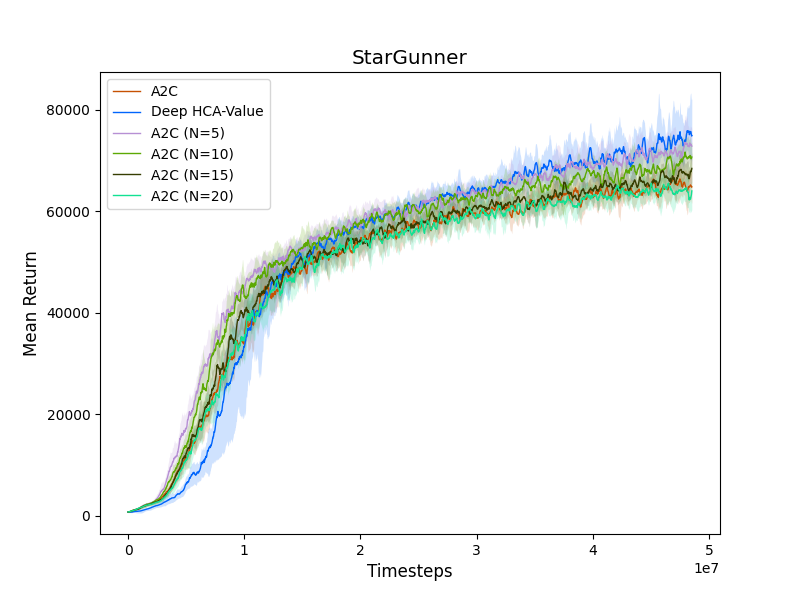}
    \caption{Comparison of Deep HCA-Value to A2C variants using $N$-step truncated advantages over trajectories of fixed length $T=32$. Deep HCA-Value outperforms all values of $N$ on BeamRider, 
    and matches $N = 5$ on StarGunner.
    }
    \label{fig:n_step}
    \vspace{-.5cm}
\end{figure}

\subsubsection{Qualitative Credit Examples}
\label{subsec:behavior}

In order to further confirm that Deep HCA-Value learns meaningful credits we took a closer look at the predictions of the credit classifier on BeamRider. In Figure \ref{fig:beamrider_example} we present predictions for $\pi_\theta(a|s_{t-\Delta})$ and $h_\phi(a|s_{t-\Delta},s_t)$ for rewarding state $s_t$ (hit) and 4 states $s_{t-\Delta}$ in the past for $\Delta=22,20,5,3$ presented in this order from left to right.

In state $s_{t-22}$ the agent decides to FIRE while the credit classifier counterfactually encourages standing still (NOOP and UP) and discourages any movement that decreases the odds of reaching $s_t$. Over the next 9 steps (not shown) the agent performed sporadic actions but returned to the same lane, which means that the Deep HCA-Value update in this case will shorten the path to the rewarding state $s_t$.

In state $s_{t-20}$ the agent is changing lanes, which takes several timesteps during which all actions are effectively NOOP. $h_\phi$ correctly gives no credit to any of the actions.

In state $s_{t-5}$ the agent is standing in the leftmost lane, and decides to LEFTFIRE. $h_\phi$ gives credit for the resulting hit to LEFTFIRE and to the equally good FIRE. It also discourages moving away from the target or doing nothing.

In state $s_{t-3}$ the classifier encourages every action that does not move. Though these actions do not cause reward, they increase the odds of reaching $s_t$ and so they're given credit. This is an undesirable crediting, but $h_\phi$ has no way of learning which features of the state are relevant to the task and which are not. In this case the exploding target is a relevant feature, while the spaceship's position is not.

\begin{figure}
    \centering
    \hspace*{0.40cm}\includegraphics[width=0.16\linewidth]{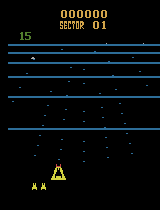}
    \hspace*{0.06cm}\includegraphics[width=0.155\linewidth]{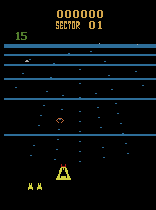}
    \hspace*{0.04cm}\includegraphics[width=0.16\linewidth]{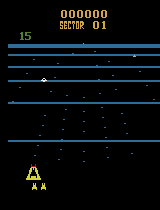}
    \hspace*{0.04cm}\includegraphics[width=0.16\linewidth]{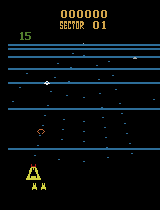}
    \hspace*{0.02cm}\includegraphics[width=0.16\linewidth]{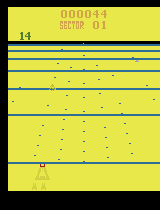}
    \hspace*{-1.4cm}\includegraphics[width=0.75\linewidth]{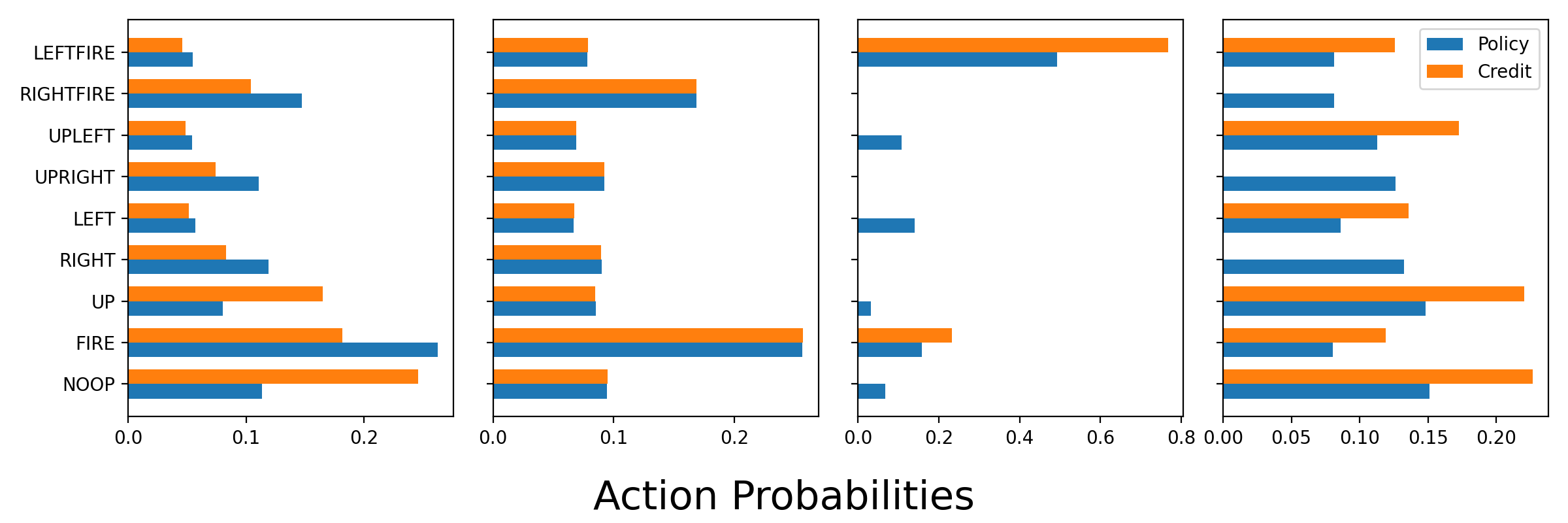}
    \caption{Example hindsight predictions on a sequence of frames leading up to a rewarding state in BeamRider.}
    \label{fig:beamrider_example}
    \vspace{-.3cm}
\end{figure}

\subsubsection{Behavioral Differences}
\label{subsec:behavior_diffs}
We looked at the learned behaviour of A2C and Deep HCA-Value on the game of Carnival to further explore their behavior. We observed in Figure \ref{fig:hca_ablation} that Deep HCA-Value outperforms A2C on this game, which motivates understanding.

Carnival is a shooting gallery. There are many small low-rewarding targets, and it is easy to hit them by shooting randomly. In addition, there is a highly-rewarding windmill which is almost entirely hidden by an orange block and therefore requires precise aim to hit. When there are no other targets on the screen, destroying the windmill ends the episode and the agent receives a bonus reward for the number of unused bullets. Example frames are in Figure \ref{fig:carnival_rollouts}.

As popularized by DQN \cite{mnih2015human}, we train with reward clipping, so both frequent low-rewarding hits on the small targets and rare high-rewarding hits on the windmill result in $+1$ reward. As a result, a hit on the windmill has a very small impact on the return, whereas it is easy to obtain distractor rewards. Thus, to learn to reliably hit the windmill an agent must perform efficient credit assignment.

As shown in the top row of Figure \ref{fig:carnival_rollouts}, A2C doesn't learn to hit the windmill, and spends all its bullets hitting small low-rewarding targets, while Deep HCA-Value speeds up reinforcement of the precision shots needed to hit the windmill, as seen in Figure \ref{fig:hca_ablation}.

We observed similar results for Seaquest and NameThisGame where Deep HCA-Value outperforms A2C by a large margin by identifying much more nuanced behavior.


\begin{figure}
    \centering
    \includegraphics[width=0.99\linewidth]{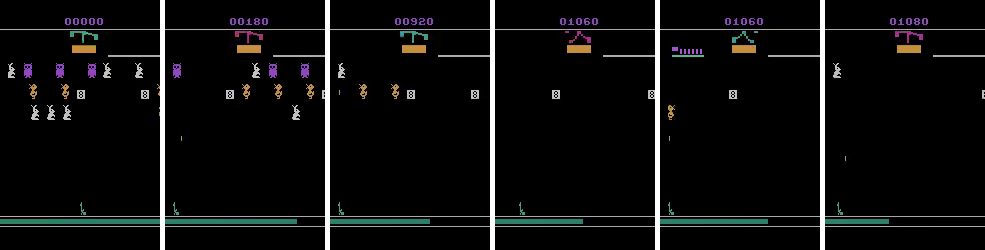} 
    \includegraphics[width=0.99\linewidth]{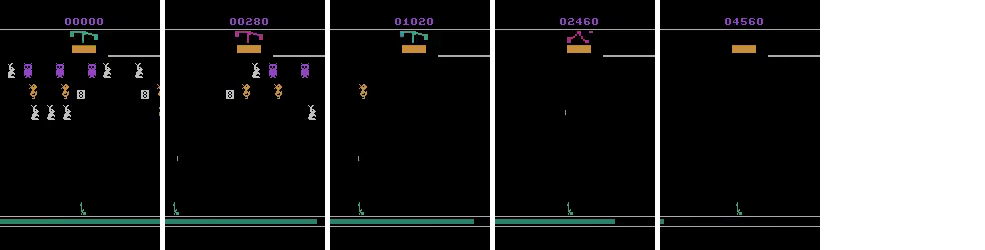}
    \caption{Rollouts from policies trained via A2C (top row) and Deep HCA-Value (bottom row) on Carnival. Screenshots are taken every ~2 seconds. Deep HCA-Value finds the highly rewarding behaviour of shooting at the windmill once other targets are gone,
    while A2C usually never leaves the left side of the screen.}
    \label{fig:carnival_rollouts}
    \vspace{-.3cm}
\end{figure}

\section{Discussion and Conclusions}
\label{sec:discussion}

Finally, we conclude with discussion. Our goal in this work was to explore the benefits, behaviors, and limitations of the HCA credit assignment framework on a relatively complex deep RL benchmark. Starting from the fundamentals of HCA, we developed Deep HCA in several variants, which improved performance on some Atari games. We also investigated what the credit classifier actually learns, and in the process discovered a number of interesting directions for further improving and understanding credit assignment.

The most direct of these is improving on the limitations of credit classifier training. While the Deep HCA-Value formulation can learn non-trivial credit assignment, performance on many games is limited by its stability and learning speed. 
We also lack a complete understanding as to what aspects of an environment influence the benefits of credit assignment. 

In the future, we hope to answer these questions, and further hope that this work can inspire and inform other research into practical credit assignment algorithms for deep reinforcement learning.

\bibliography{references.bib}
\bibliographystyle{icml2022}


\newpage
\appendix
\onecolumn

\section{Appendix}

\subsection{Tabular Experiments}
\label{app:tabular}
In addition to our ALE experiments, we also implemented tabular versions of State HCA and A2C to validate our modifications in a well-understood setting. As we are primarily concerned with the tabular validity of our modifications in this section, we used the FrozenLake benchmark \cite{brockman2016openai} as a representative tabular task which is not specifically designed to benefit from credit assignment (though credit does affect performance), but which allows us to see whether our modifications improve or worsen performance in a ``typical'' environment without the challenges of neural network function approximation.

The results of this validation are shown in Figure \ref{fig:tabular_exps}. We evaluated tabular versions of HCA, HCA-Prior, and HCA-Value, running each method with 100 random training seeds and plotting the mean and min/max of all runs. We observe that the policy prior improves over normal HCA, which is stable in this environment. This suggests that the policy prior is applicable to many types of MDPs, rather than being Atari specific. Similar to the results seen for Atari in Figure \ref{fig:hca_ablation}, HCA-Value improves upon HCA-Prior in turn.

To validate our finding that HCA without a value baseline is vulnerable to premature convergence if returns can be negative, we also tested our tabular HCA variants in a modified version of FrozenLake, where we apply a -1 reward penalty when the agent falls in the ``holes'' in the ice. In this case, the optimal policy is the same, but as we see in Figure \ref{fig:tabular_exps} HCA-Prior converges to a return of 0 (avoids holes, but does not cross the ice), while HCA-Value's performance is not reduced compared to the original task.

\begin{figure}
    \centering
    \includegraphics[width=0.49\linewidth]{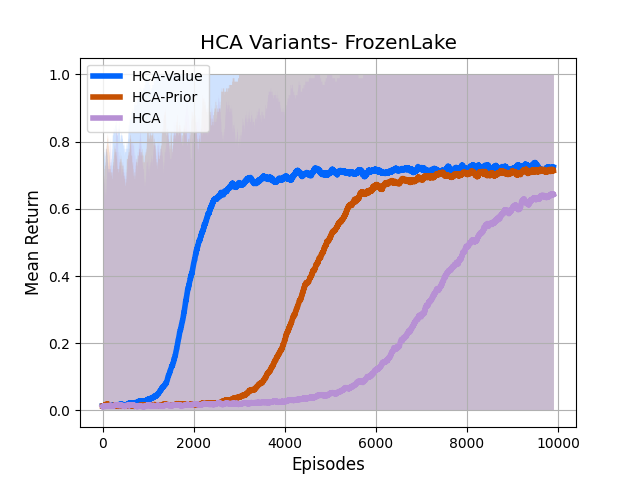} 
    \includegraphics[width=0.49\linewidth]{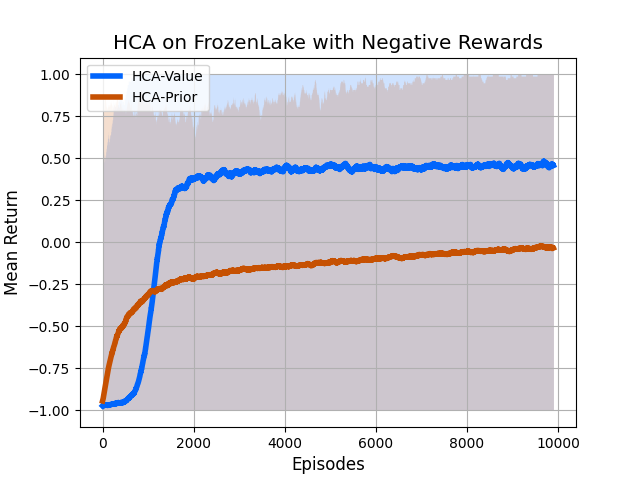}    
    \caption{Tabular validation of our HCA extensions on FrozenLake (left) and FrozenLake with negative rewards (right). HCA-Prior improves over normal State HCA, and HCA-Value improves over HCA-Prior. Like in the ALE, HCA-Prior converges to a local maxima when negative rewards are present.}    
    \label{fig:tabular_exps}
\end{figure}

\subsection{Full Atari Results}
\label{app:full_atari}
In Figure \ref{fig:atari_training_curves} we present the full training curves for each game we tested in the ALE. We plot the mean of three training runs for each method, and apply a rolling average of window size 100 for smoothing. The min and max among the three replicates are shown by the shaded region.


\begin{figure}
    \centering\vspace{-0.3cm}
    \includegraphics[width=0.24\linewidth]{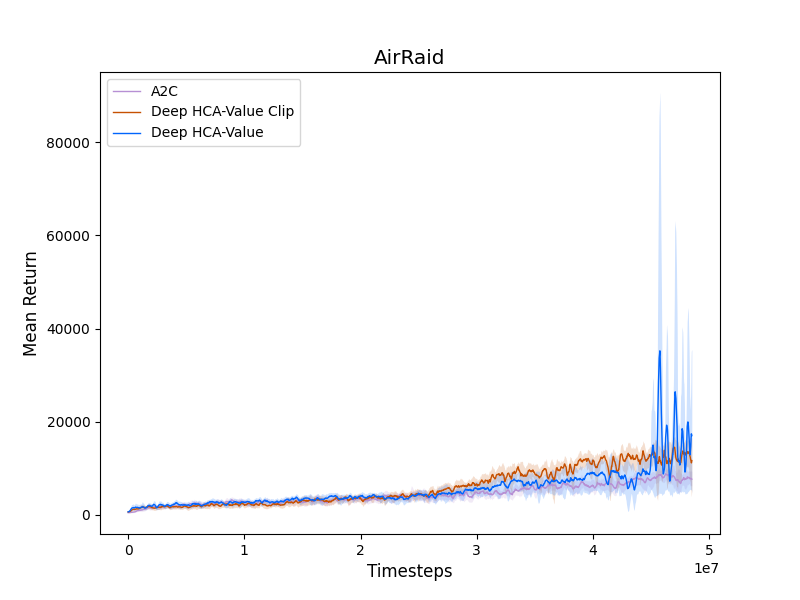}
    \includegraphics[width=0.24\linewidth]{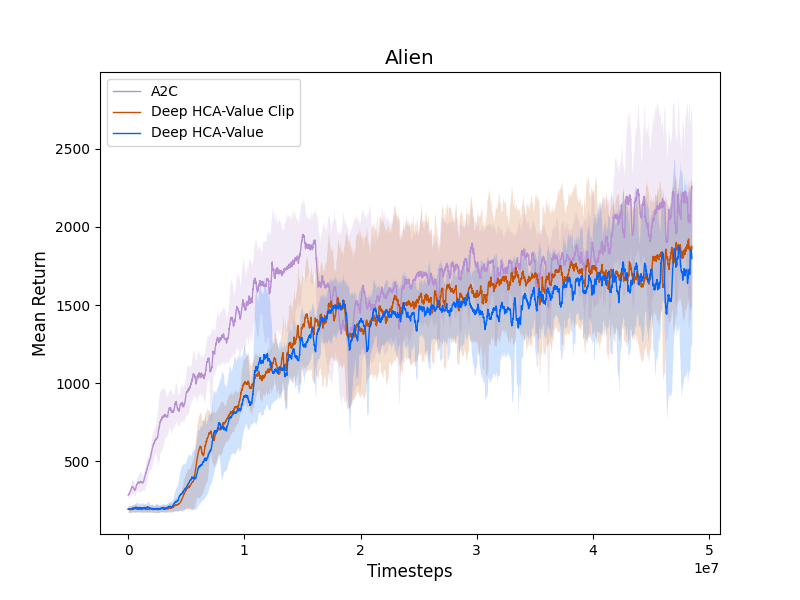}
    \includegraphics[width=0.24\linewidth]{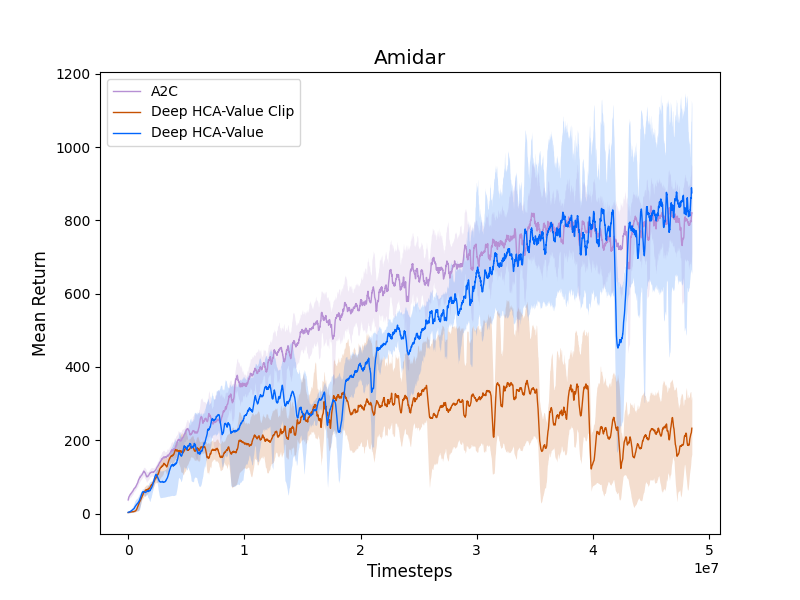}
    \includegraphics[width=0.24\linewidth]{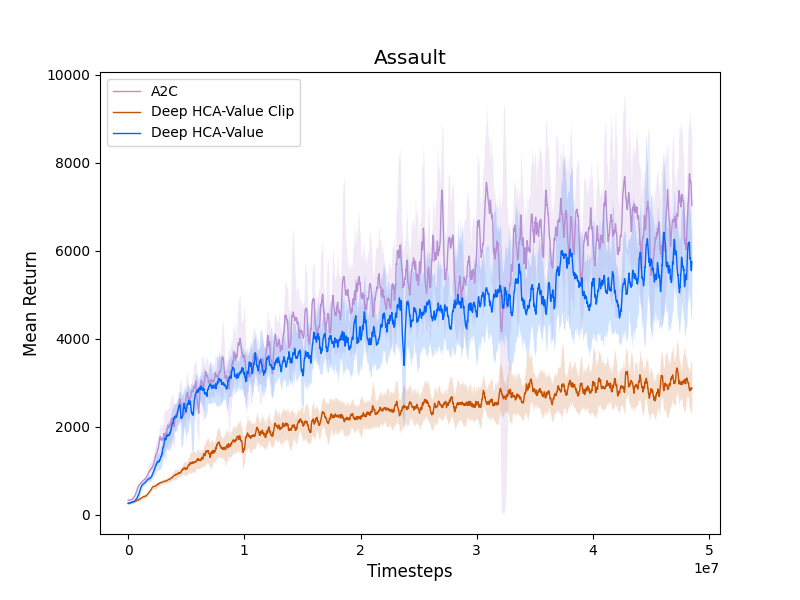}\vspace{-0.3cm}
    \includegraphics[width=0.24\linewidth]{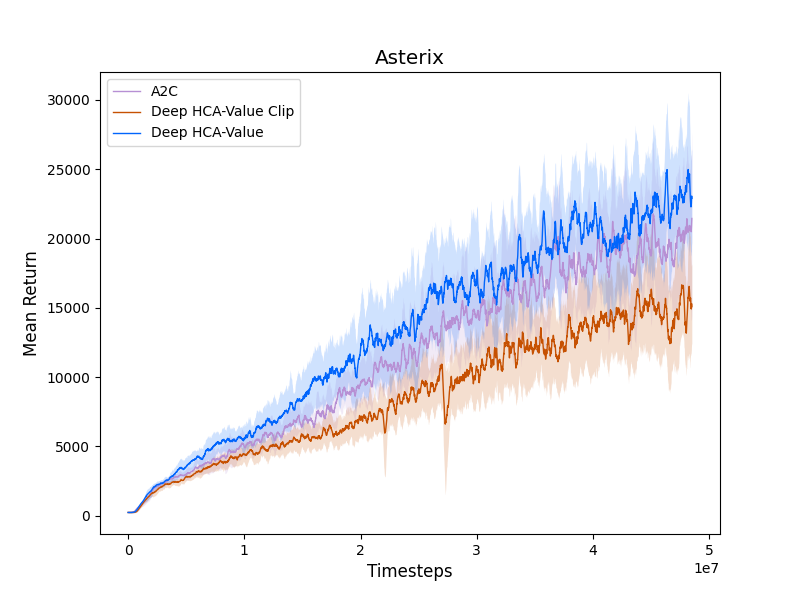}
    \includegraphics[width=0.24\linewidth]{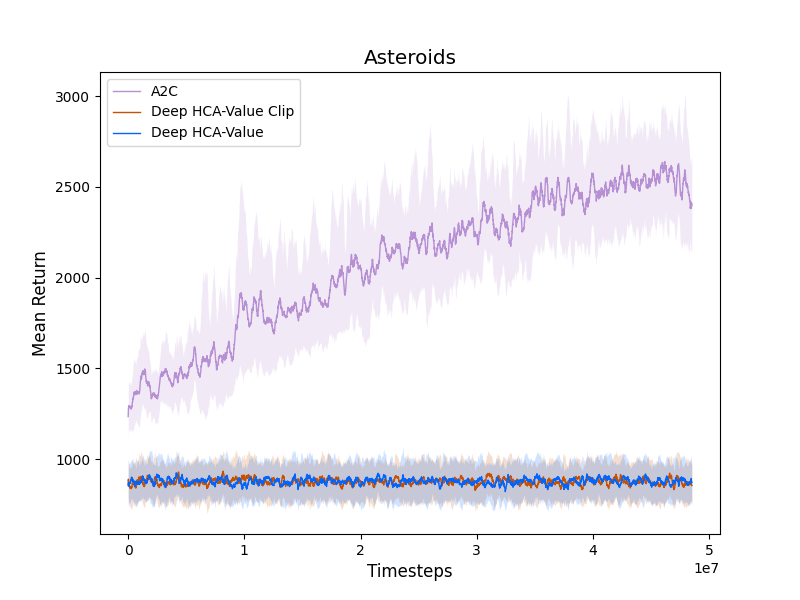}
    \includegraphics[width=0.24\linewidth]{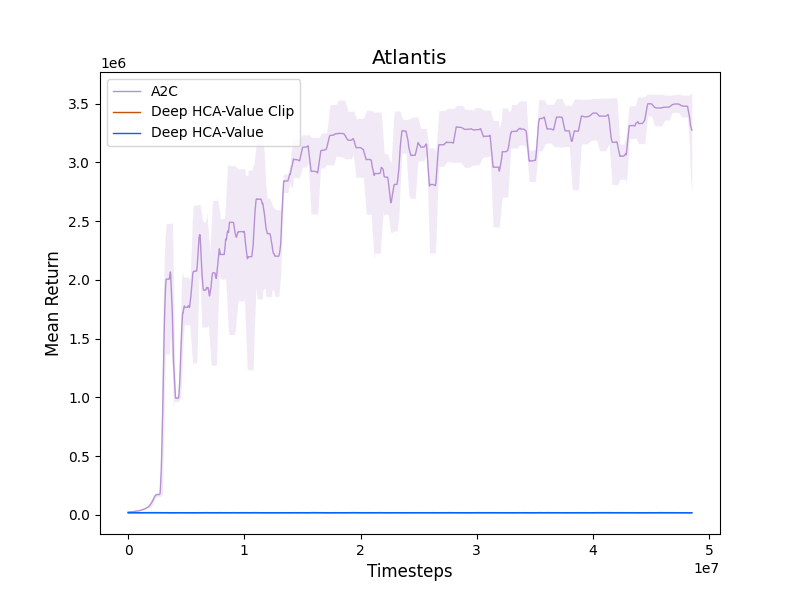}
    \includegraphics[width=0.24\linewidth]{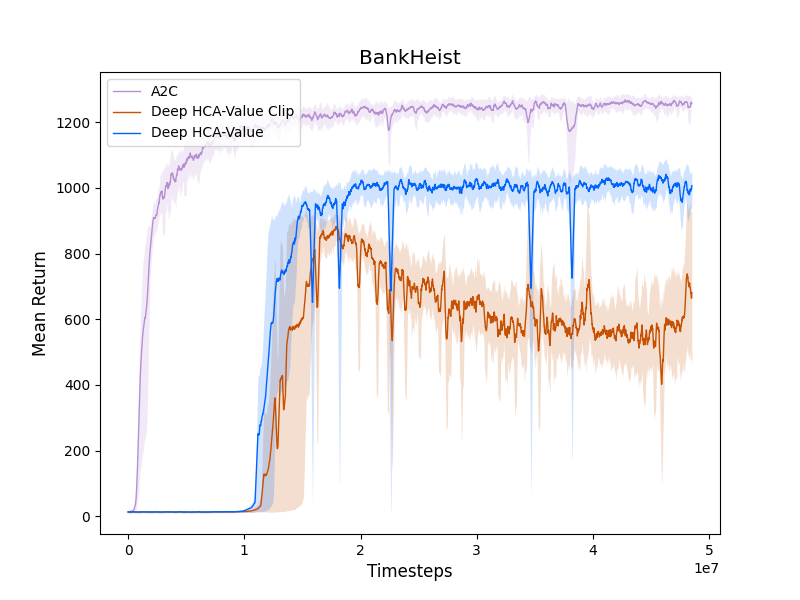}\vspace{-0.3cm}
    \includegraphics[width=0.24\linewidth]{images/icml_performance_plots/beamrider.png}
    \includegraphics[width=0.24\linewidth]{images/icml_performance_plots/breakout.png}
    \includegraphics[width=0.24\linewidth]{images/icml_performance_plots/carnival.png}
    \includegraphics[width=0.24\linewidth]{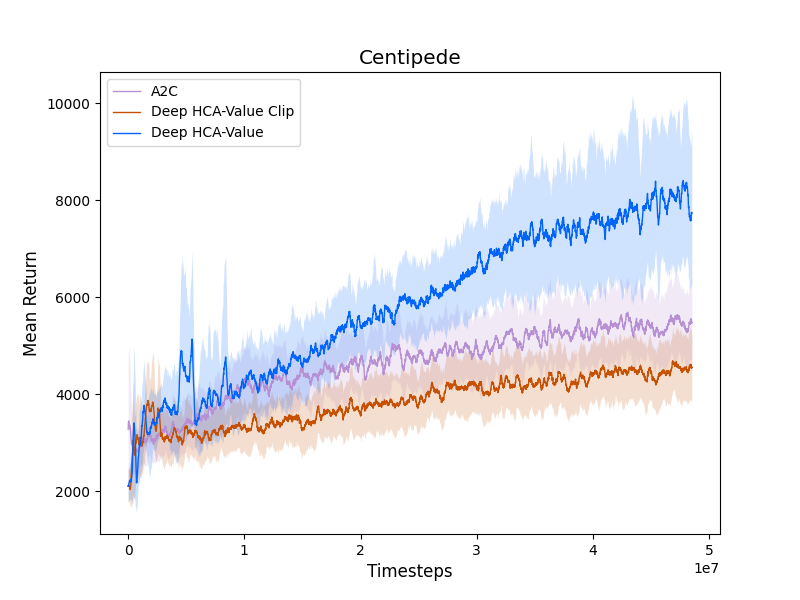}\vspace{-0.3cm}
    \includegraphics[width=0.24\linewidth]{images/icml_performance_plots/demonattack.png}
    \includegraphics[width=0.24\linewidth]{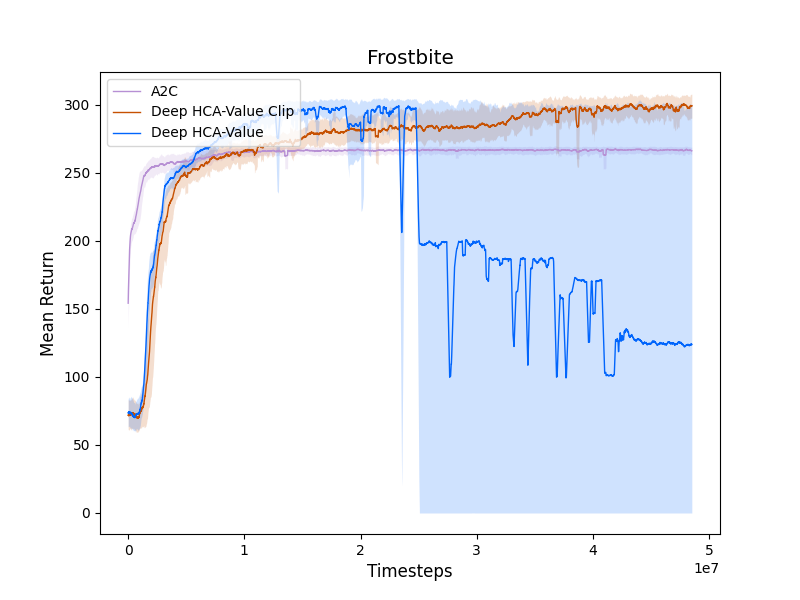}
    \includegraphics[width=0.24\linewidth]{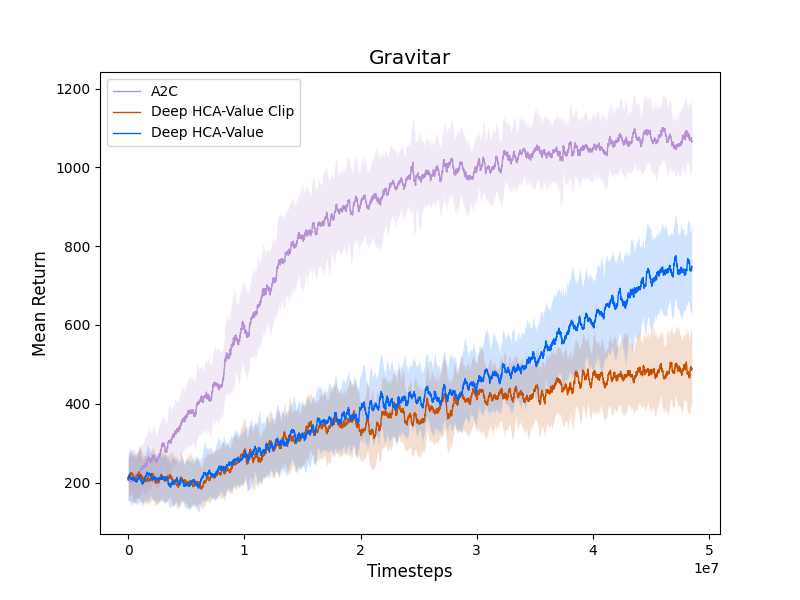}
    \includegraphics[width=0.24\linewidth]{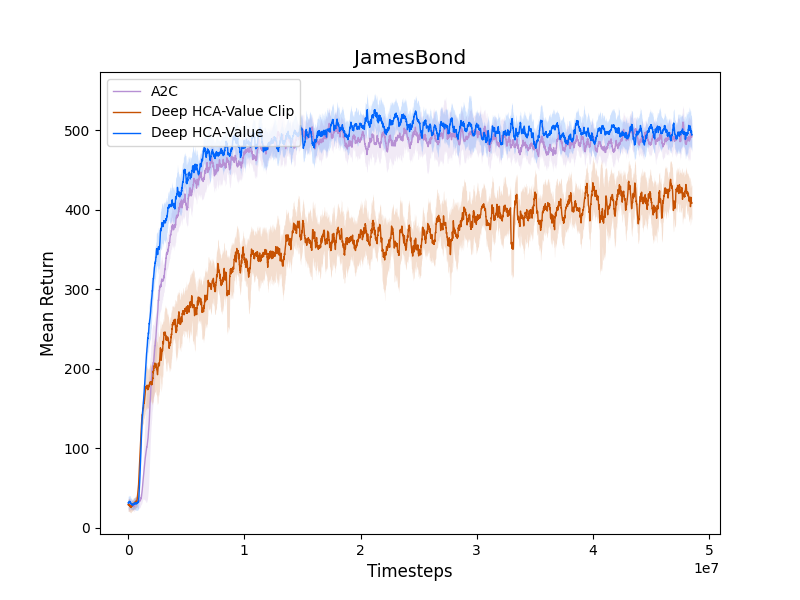}\vspace{-0.3cm}
    \includegraphics[width=0.24\linewidth]{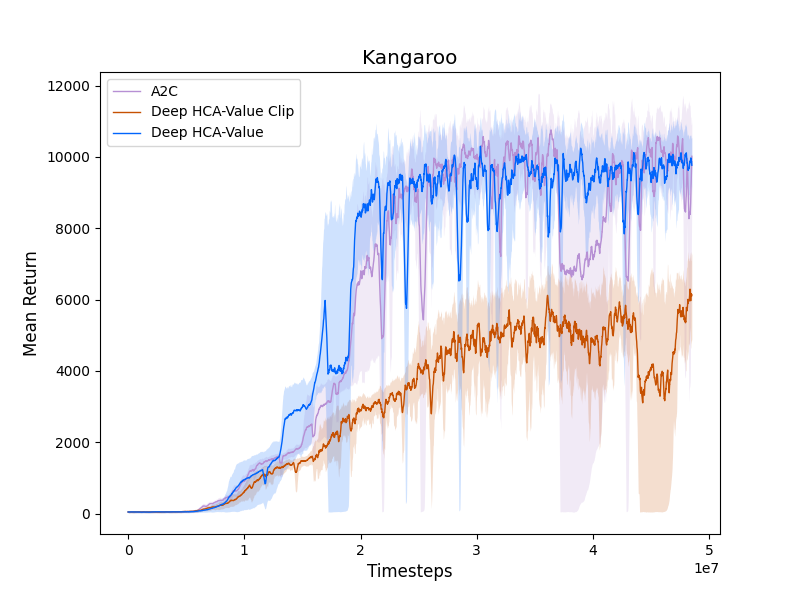}
    \includegraphics[width=0.24\linewidth]{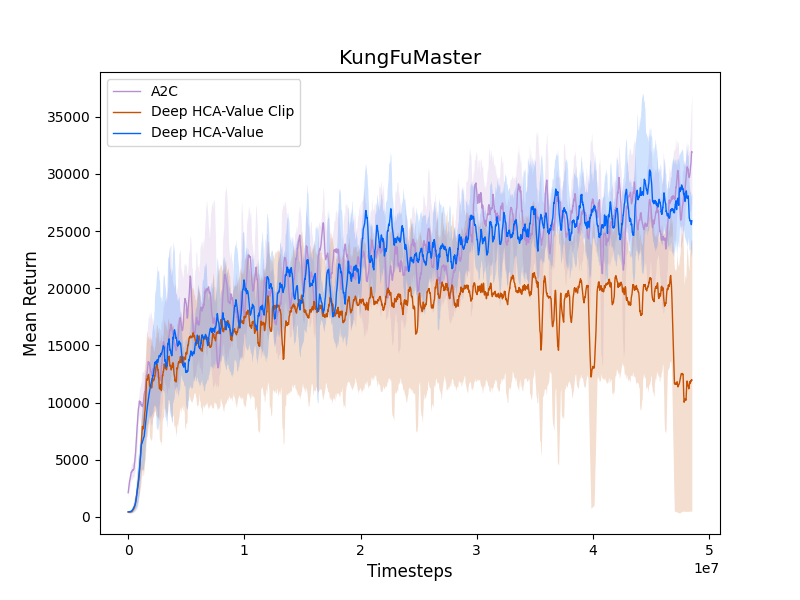}
    \includegraphics[width=0.24\linewidth]{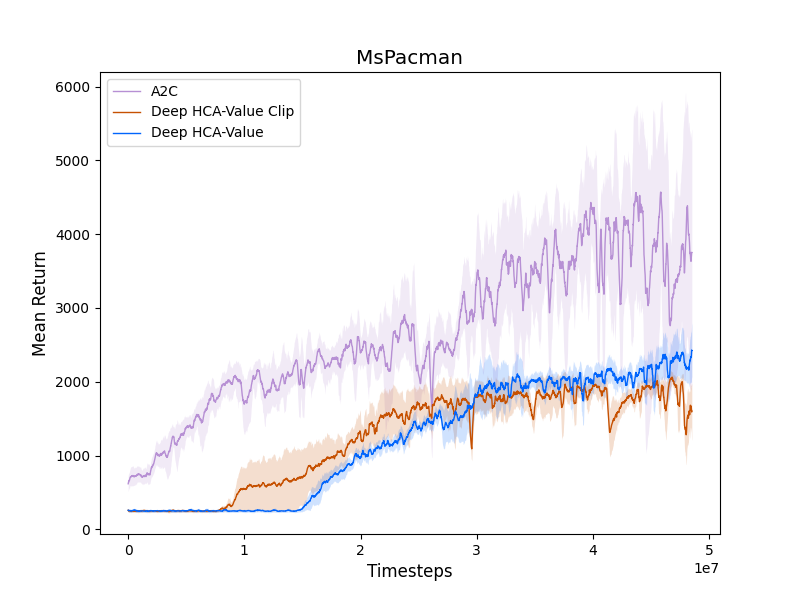}
    \includegraphics[width=0.24\linewidth]{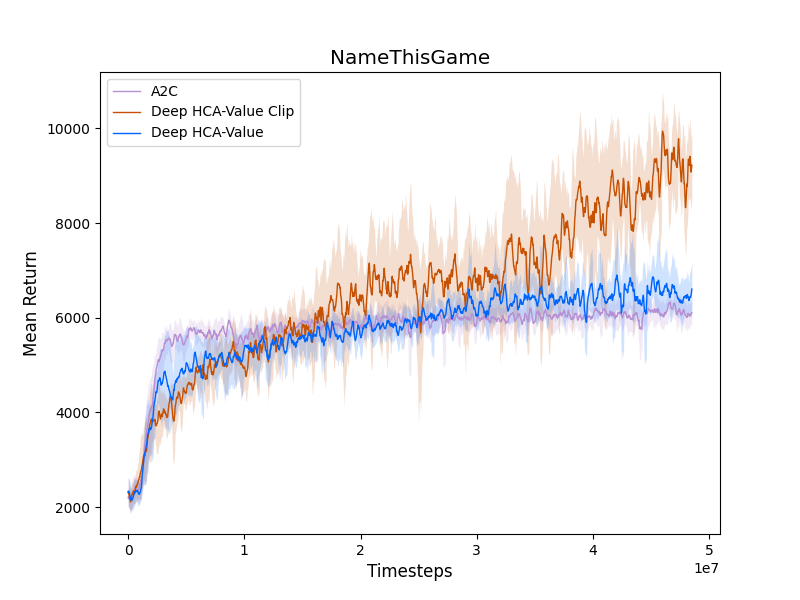}\vspace{-0.3cm}
    \includegraphics[width=0.24\linewidth]{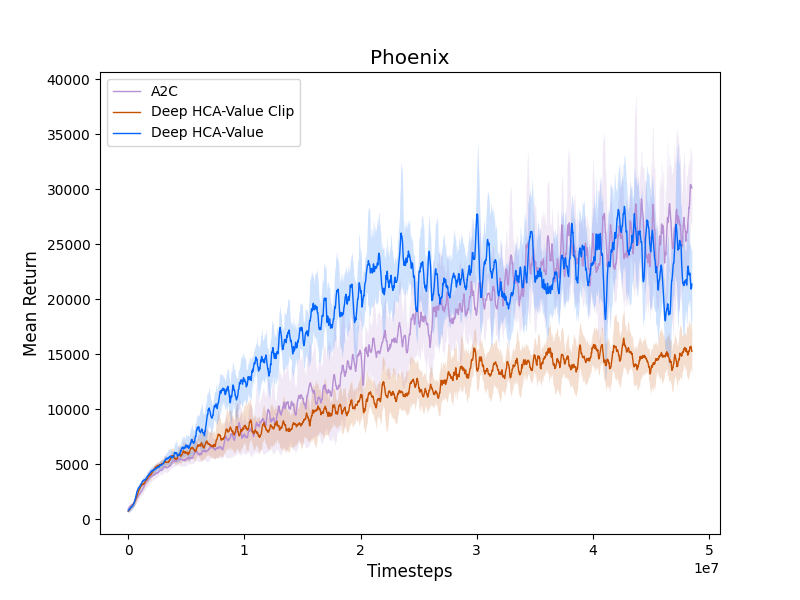}
    \includegraphics[width=0.24\linewidth]{images/icml_performance_plots/pong.png}
    \includegraphics[width=0.24\linewidth]{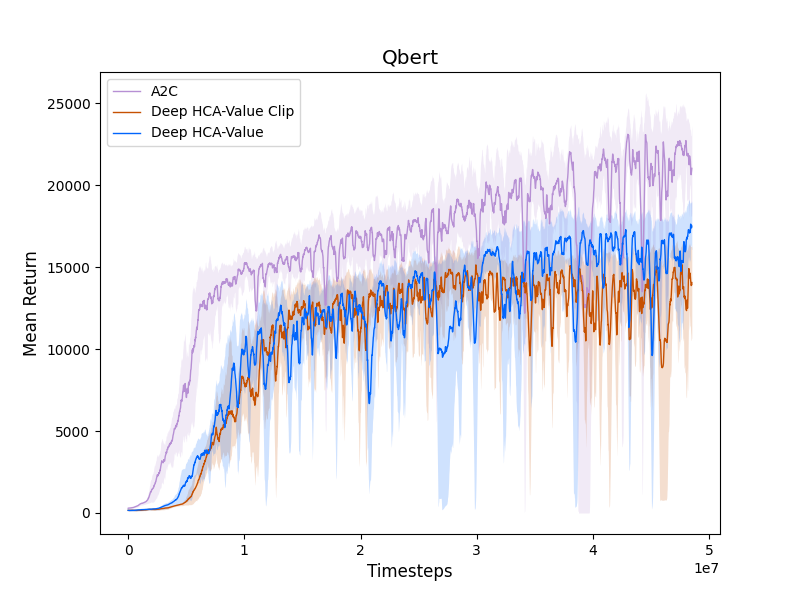}
    \includegraphics[width=0.24\linewidth]{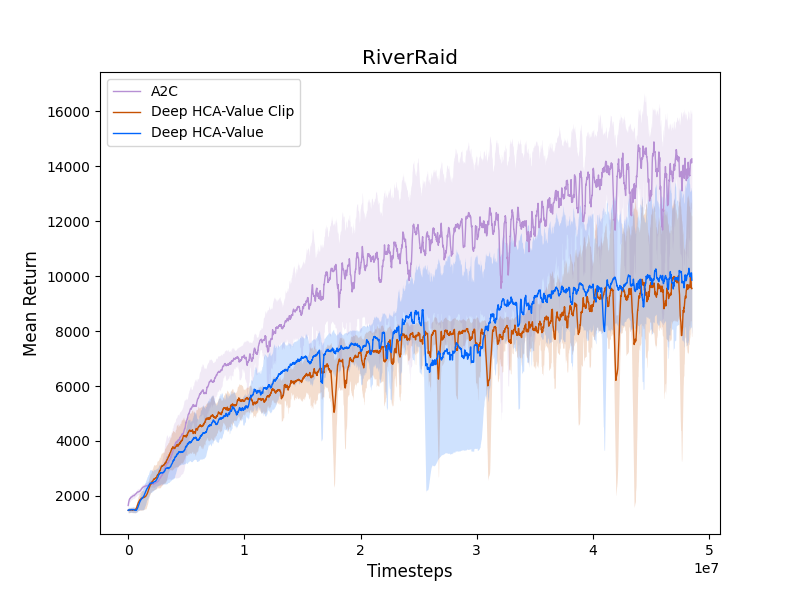}\vspace{-0.3cm}
    \includegraphics[width=0.24\linewidth]{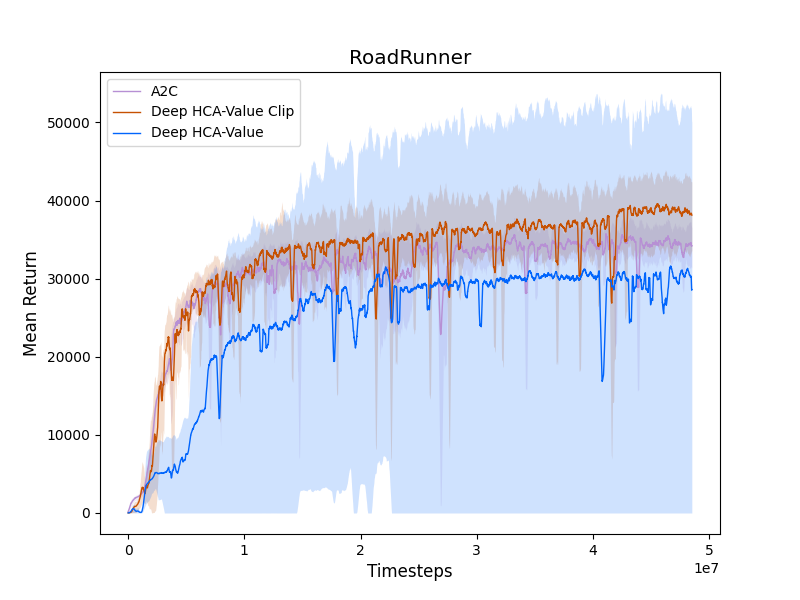}
    \includegraphics[width=0.24\linewidth]{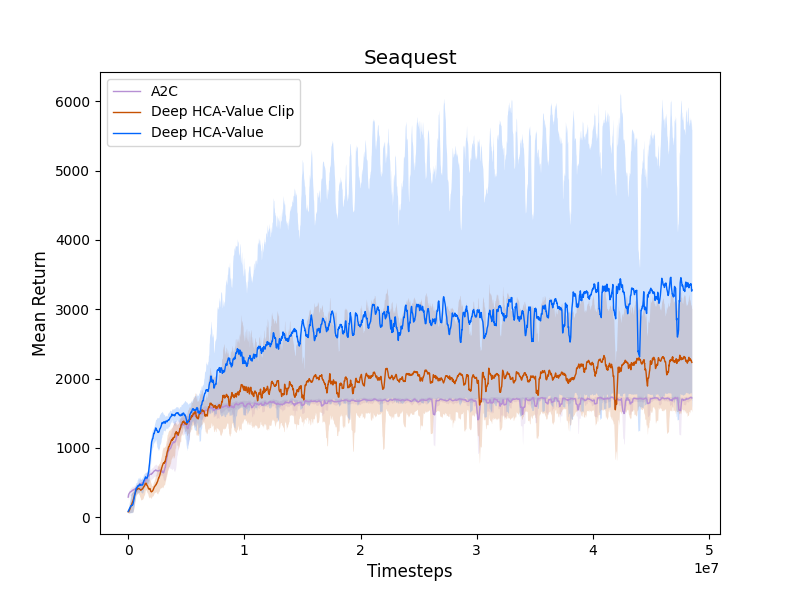}
    \includegraphics[width=0.24\linewidth]{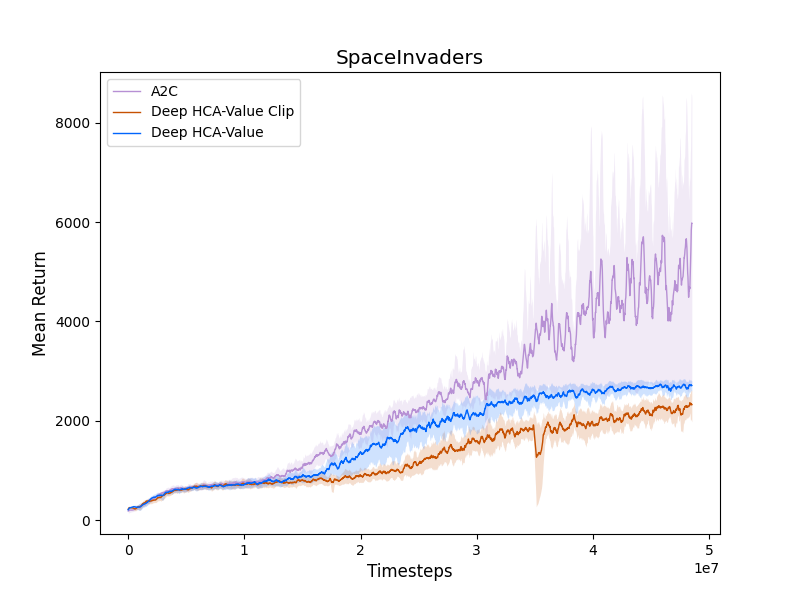}
    \includegraphics[width=0.24\linewidth]{images/icml_performance_plots/stargunner.png}\vspace{-0.3cm}
    \includegraphics[width=0.24\linewidth]{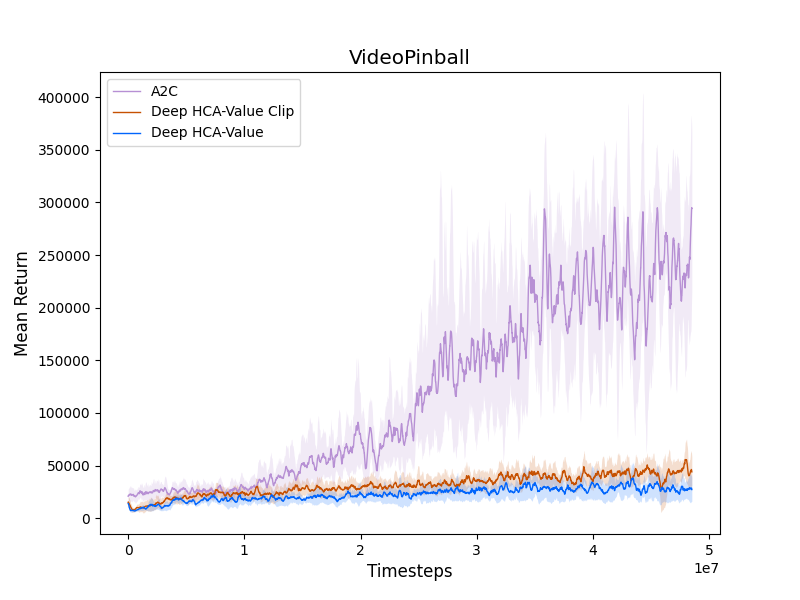}
    \includegraphics[width=0.24\linewidth]{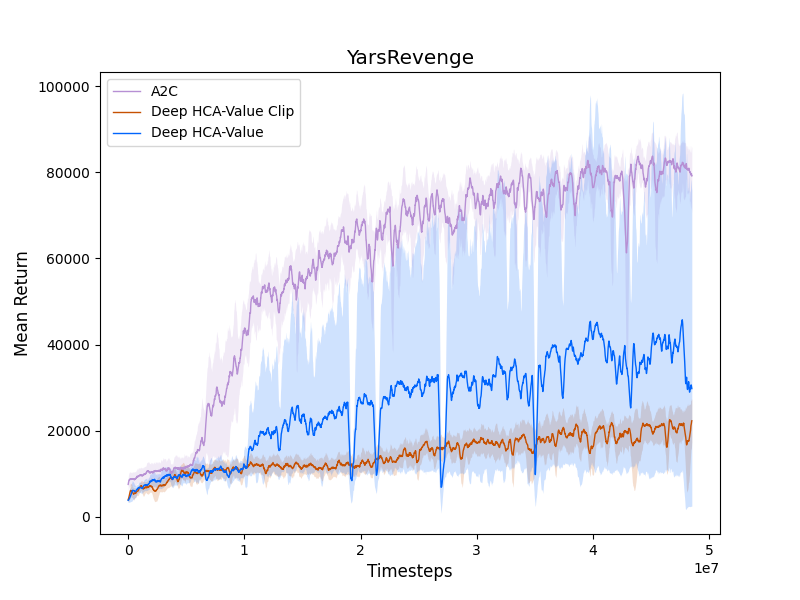}\vspace{-0.1cm}
    \caption{Performance plots on all games for Deep HCA-Value, Deep HCA-Value Clip, and A2C.}
    \label{fig:atari_training_curves}
\end{figure}

\subsection{Additional N-Step Truncation Comparisons}
\label{app:n_step_full}
In Figure \ref{fig:n_step_full} we show the five games where we compared Deep HCA-Value to our N-step truncated A2C variant. In the four games where Deep HCA-Value performs well (BeamRider, Carnival, NameThisGame, and StarGunner) we note that none of the variants exceeds its performance, although in StarGunner N=5 matches it. On DemonAttack, where Deep HCA-Value underperforms A2C, the truncations not only outperform Deep HCA-Value but also outperform A2C, suggesting that while learning to truncate returns is useful on this game the overfitting issues noted in Section \ref{subsec:what_has_credit_learned} prevent Deep HCA-Value from matching the performance of the truncated variants.

\begin{figure}
    \centering
    \includegraphics[width=0.24\linewidth]{images/icml_n_step/BeamRider.png}
    \includegraphics[width=0.24\linewidth]{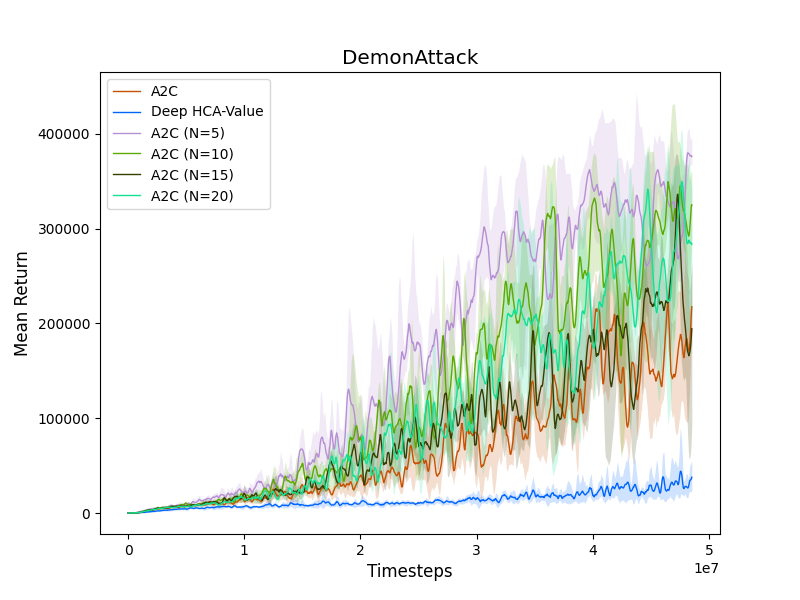}
    \includegraphics[width=0.24\linewidth]{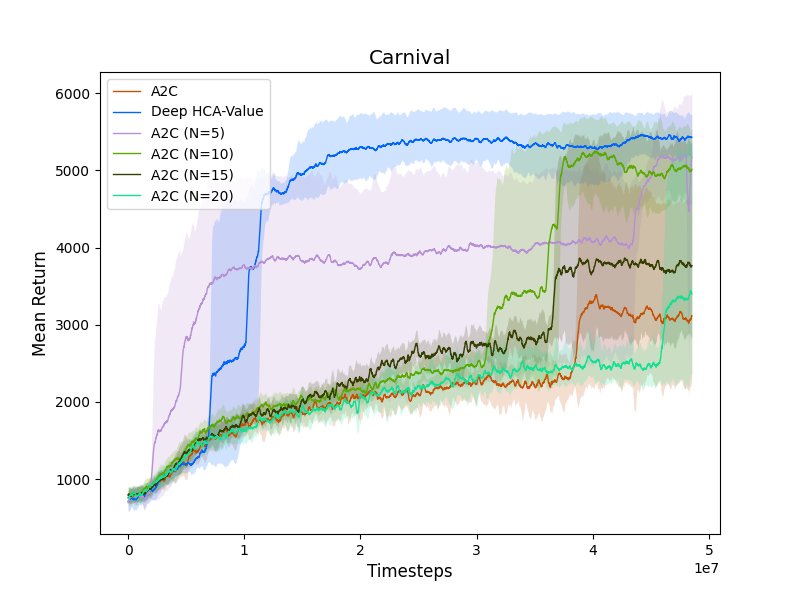}
    \includegraphics[width=0.24\linewidth]{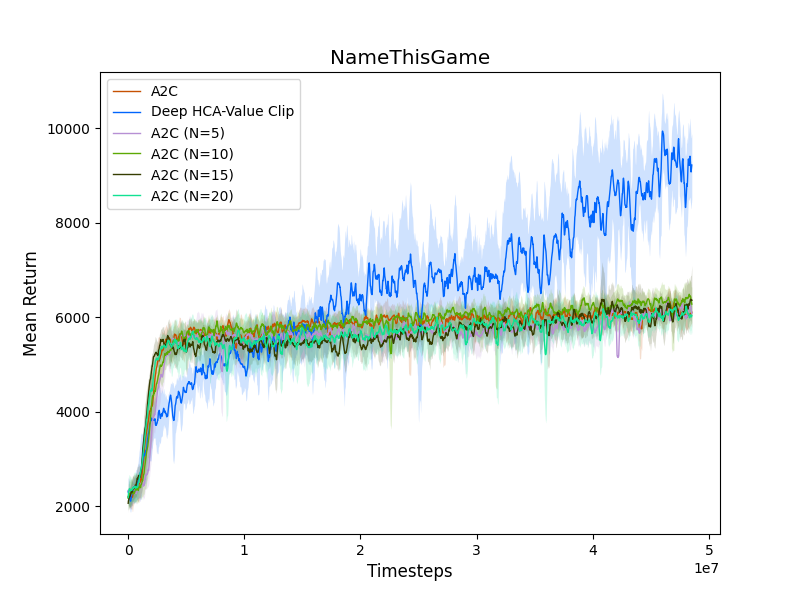}
    \includegraphics[width=0.24\linewidth]{images/icml_n_step/StarGunner.png}
    \caption{Performance plots on additional games for N-step truncation variants of A2C versus Deep HCA-Value.}
    \label{fig:n_step_full}
\end{figure}

\subsection{Additional Deep HCA-Value Ablations}
\label{app:more_ablations}

In this section, we evaluate several ablations of Deep HCA-Value. We ran Deep HCA (without our extensions), Deep HCA-Prior (with only the policy prior), Deep HCA-Prior Features (with policy logits passed as features to the hindsight classifier rather than using them as explicit prior), Deep HCA-Value with no prior (but still using bootstrapped advantages), and Deep HCA-Value using policy logits as features. The results are shown in Figure \ref{fig:hca_variants}. Other than on Breakout, Deep HCA-Value (the version described throughout this paper) performed best among the methods, though Deep HCA-Value with the policy features prior performed comparably on BeamRider.

Breakout is an interesting case because unmodified Deep HCA appears to train stably, while its extensions do not, unlike the other games tested. We hypothesize that this is because without a prior or advantages both the credit classifier and policy train slowly, which may avoid credit-induced policy collapse on this game as discussed in Section \ref{subsec:what_has_credit_learned}. Also of interest is that Deep HCA-Value without any policy prior starts out competitive with the prior+advantage methods on BeamRider and Pong, but performance collapses early in training, likely due to the moving target issues described in Section \ref{subsubsec:policy_prior}.

\begin{figure}
    \centering
    \includegraphics[width=0.24\linewidth]{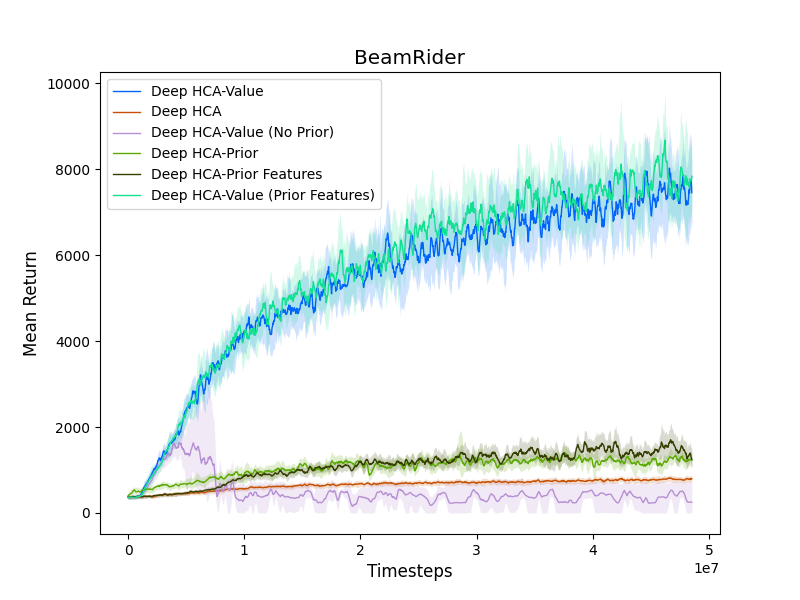}
    \includegraphics[width=0.24\linewidth]{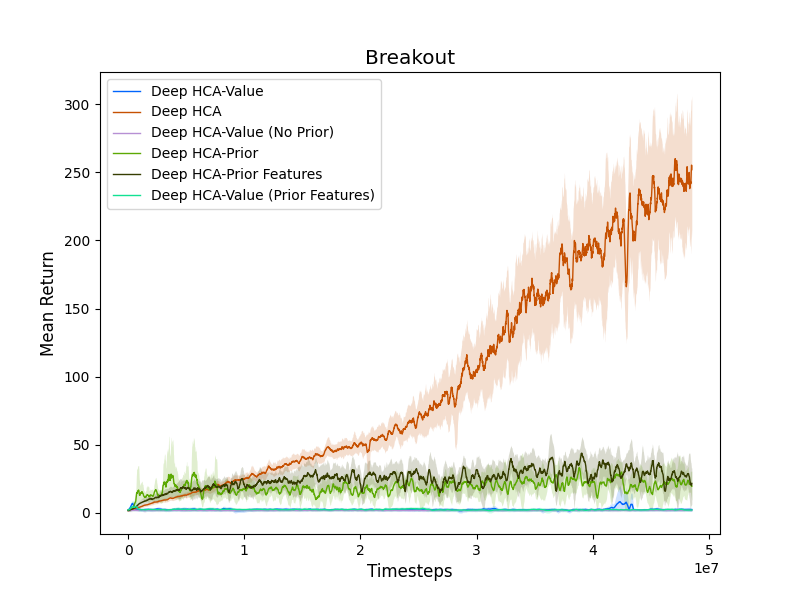}
    \includegraphics[width=0.24\linewidth]{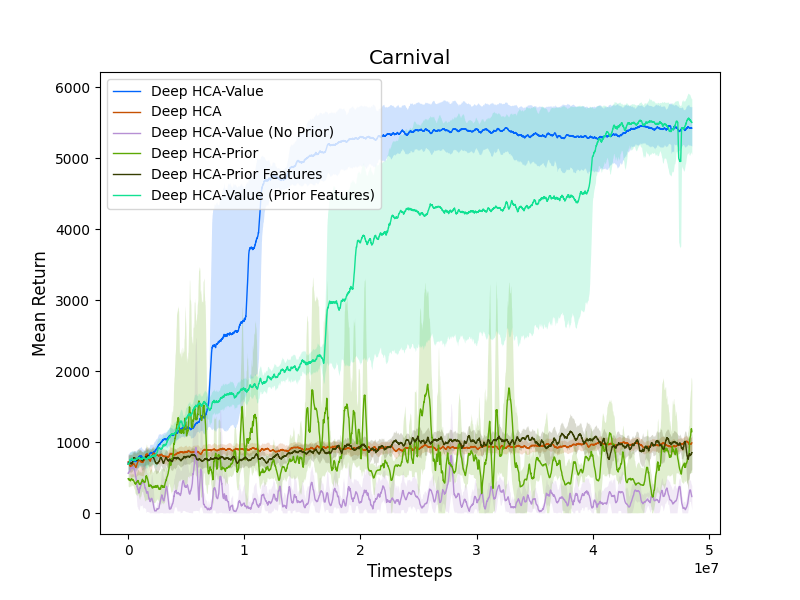}
    \includegraphics[width=0.24\linewidth]{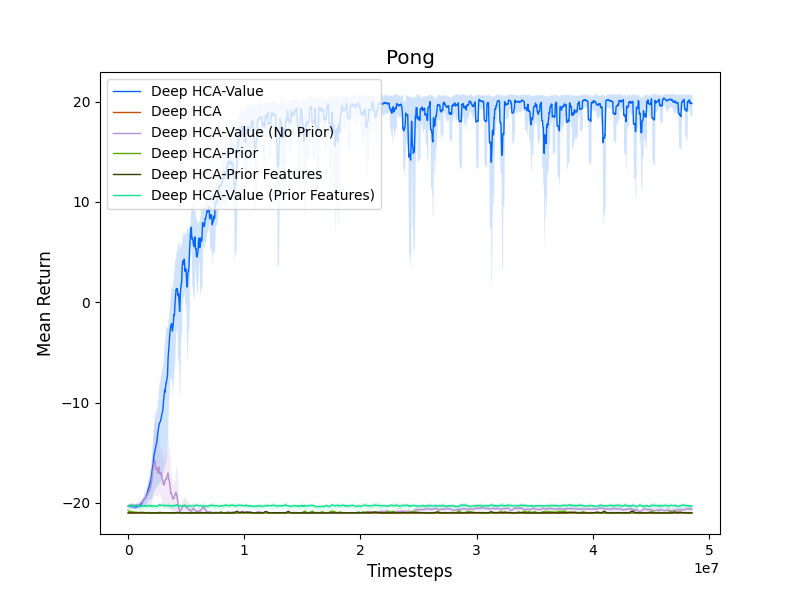}
    \includegraphics[width=0.24\linewidth]{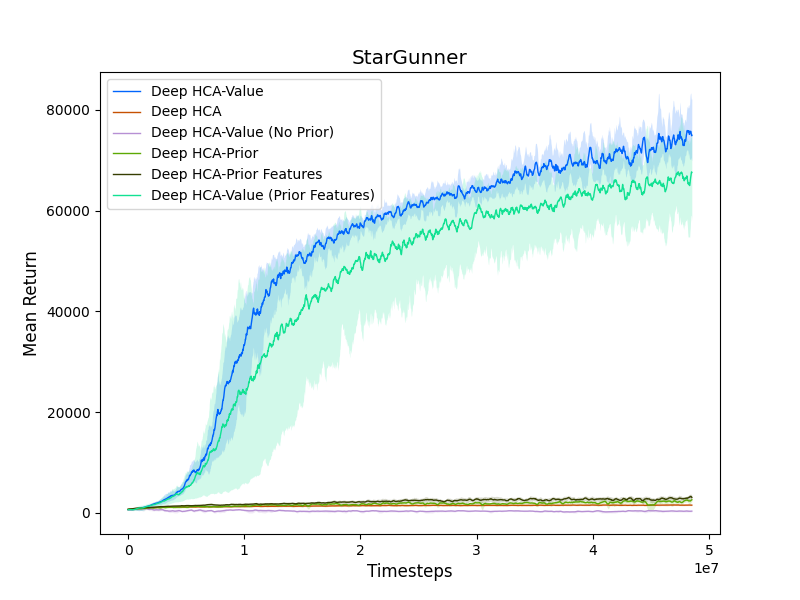}
    \caption{Performance plots of ablations to Deep HCA-Value, showing the performance of Deep HCA, Deep HCA-Prior, and various combinations on a selection of games. With the exception of Breakout (discussed further in Section \ref{sec:experiments}) Deep HCA-Value is more stable and performant than other variants.}
    \label{fig:hca_variants}
\end{figure}

\subsection{Additional Experimental Details}
\label{app:exp_details}
In this section, we further elaborate on the experimental details we used to produce our results. Our code was implemented on top of a publicly-available A2C implementation \cite{pytorchrl}, and all non-algorithm-specific hyperparameters and implementation details were shared across algorithms. The hyperparameters we used to train are shown in Table \ref{tab:hyperparams}. We use a policy and value function which share hidden layers of AtariCNN \cite{mnih2015human} network, consistent with prior work and our baseline implementation. For the credit classifier we use a separate network that takes two concatenated observations as input and otherwise is identical to AtariCNN architecture with two additional linear layers on top. The AtariCNN architecture consists of 3 convolutional layers and 1 fully-connected layer, with 32, 64, 32, and 512 channels/output units respectively, kernel sizes of 8, 4, and 3 for the 3 convolutional layers, which have strides of 4, 2, and 1. We used ReLU nonlinearities throughout. Each output head (policy, value or credit) is implemented as an additional fully-connected layer to output the desired quantity (action logits, value estimate or credit residuals respectively). We use the RMSProp optimizer to train the agent and Adam to train the credit classifier.

\begin{table}[t]
    \centering
    \begin{tabular}{|c|c|c|}
        \hline
        Name & Value & Description \\\hline
        \textbf{Shared Hyperparams} & & \\\hline
        lr & 0.0007 & Learning rate for the agent (policy and value) \\\hline
        alpha & 0.99 & $\alpha$ value for the RMSProp optimizer \\\hline
        num-steps & 32 & Number of timesteps to roll out before bootstrapping \\\hline
        num-processes & 8 & Number of parallel threads/environments rolling out \\\hline
        gamma & 0.99 & Time decay discount factor $\gamma$ for rewards \\\hline
        entropy-coef & 0.01 & Policy entropy bonus for policy training \\\hline
        value-loss-coef & 0.5 & LR multiplier for value updates \\\hline
        max-grad-norm & 0.5 & Max gradient norm for parameter gradients allowed \\\hline
        num-env-steps & 50,000,000 & Number of environment steps to train for \\\hline
        \textbf{Deep HCA Variants} & & \\\hline
        teacher-num-layers & 3 & Number of FC layers on top of AtariCNN for credit classifier \\\hline
        teacher-lr & 0.00005 & Learning rate for the credit classifier trained with Adam \\\hline
        teacher-num-batches & 8 & Number of batches to train credit per policy update \\\hline
        teacher-batch-size & 528 & Batch size for training the credit classifier \\\hline
        \textbf{Deep HCA HCA-Value Clip Only} & & \\\hline
        clip-hindsight & true & Clip hindsight probabilities as discussed in Section \ref{app:neg_rewards} \\\hline
        max-ratio & 3 & $\lambda$ as discussed in Section \ref{app:neg_rewards} \\\hline
    \end{tabular}
    \caption{Hyperparameters used for training A2C and Deep HCA-Value.}
    \label{tab:hyperparams}
\end{table}


\subsection{Negative Return Divergence}
\label{app:neg_rewards}

In this section we provide additional experiments
demonstrating the susceptibility of HCA methods to divergence in the presence of negative returns and describe extra measures to alleviate this issue.

In order to demonstrate that the policy collapse issue described in Section \ref{subsubsec:advantage_credit} is not unique to Pong, we test Deep HCA on a modified version of Breakout where we add a penalty of $-5$ for dropping the ball.

In Figure \ref{fig:breakout_negative_reward} we observe that both Deep HCA and Deep HCA with policy prior (Deep HCA-Prior) suffer from policy collapse at the start of training, and make no progress on the task. We do observe that Deep HCA-Prior's entropy collapse is less severe, which reinforces the point that this issue is related to hindsight classifier training speed -- a hindsight classifier with an explicit prior parametrization trains faster and thus mitigates policy collapse to some extent.

\begin{figure}
    \centering
    \includegraphics[width=0.49\linewidth]{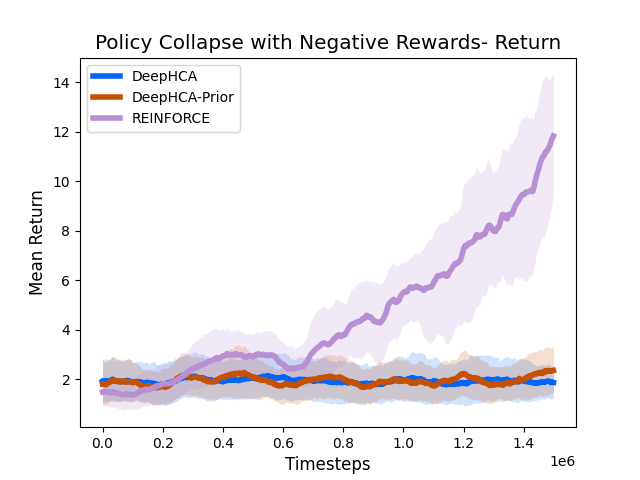} 
    \includegraphics[width=0.49\linewidth]{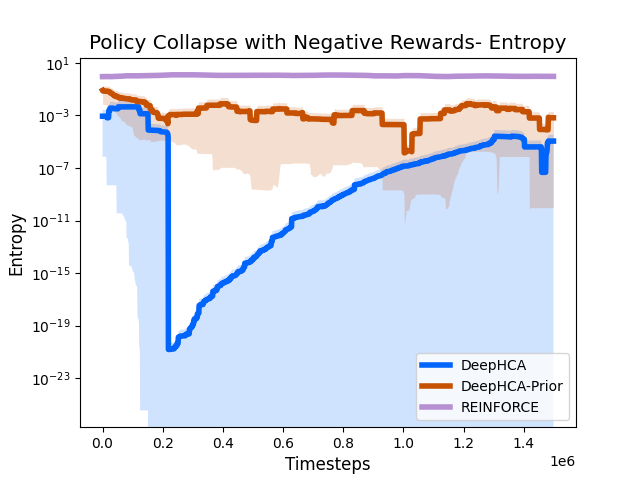}
    \caption{Training curves for Deep HCA variants without advantage alongside REINFORCE on Breakout with a reward penalty of $-5$ applied for dropping the ball. The shaded regions show min and max across 3 runs. Returns shown are computed using the original environment reward, which cannot be negative. Note that entropy is plotted on a log scale. Deep HCA without advantage suffers from policy collapse, preventing any progress on the task.}
    \label{fig:breakout_negative_reward}
\end{figure}

\subsection{Unified Crediting of Rewards}
\label{app:credit_all_rewards}
In this section, we will justify the update rule defined in Equation \eqref{eq:deep_hca_update}, specifically under which assumptions we can simplify Equation \eqref{eq:hca_update} by unifying how credit is assigned to rewards.

\begin{theorem}
\label{th:r_next_state}
Assume a MDP $(\mathcal{S}, \mathcal{A}, p, r, \gamma)$ where for any action $a$, state $s_t$ and next state $s_{t+1}\sim p(\cdot|s_t,a)$ reward is a function of next state $r=r(s_{t+1})$. For any two states $s_t$ and $s_k$ such that $k>t$ and any action $a$ let $h(a|s_t,s_k)$ be conditional probability $P(A_t=a|S_t=s_t,S_k=s_k)$ over trajectories sampled from policy $\pi_\theta$. Then the gradient of the value function at some state $s_0$ is:
\begin{equation*}
    \nabla_\theta V^{\pi_\theta}(s_0)
     = E_{\tau|s_0}\left[ \sum_{t\geq0} \gamma^t\sum_a \nabla_\theta\log\pi_\theta(a|S_t)
        \sum_{k\geq t}\gamma^{k-t}h(a|S_t,S_{k+1})R_k
    \right]
\end{equation*}
\end{theorem}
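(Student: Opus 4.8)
The plan is to prove the identity directly from the policy gradient theorem (Equation \eqref{eq:pg}) by a single application of the tower property of conditional expectation. One could instead start from the State HCA update in Equation \eqref{eq:hca_update} and argue that, when $r$ depends only on the next state, the separate immediate-reward term $\pi_\theta(a|S_t)r(S_t,a)$ can be folded into the hindsight sum; but that route still amounts to redoing the conditioning argument, so a direct derivation is cleaner. The key observation driving everything is that if we replace the sampled score $\nabla_\theta\log\pi_\theta(A_t|S_t)$ by its conditional expectation given the pair $(S_t, S_{k+1})$, this equals $\sum_a h(a|S_t,S_{k+1})\nabla_\theta\log\pi_\theta(a|S_t)$ by the very definition of $h$.

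Concretely, I would first expand $\nabla_\theta V^{\pi_\theta}(s_0) = E_{\tau|s_0}\!\left[\sum_{t\geq0}\gamma^t\nabla_\theta\log\pi_\theta(A_t|S_t)\sum_{k\geq t}\gamma^{k-t}R_k\right] = \sum_{t\geq0}\sum_{k\geq t}\gamma^k\,E_{\tau|s_0}\!\left[\nabla_\theta\log\pi_\theta(A_t|S_t)R_k\right]$, where the interchange of the expectation with the two sums is justified by $\gamma<1$ together with bounded rewards and score (Fubini). Next, using the hypothesis $R_k = r(S_{k+1})$, I condition each summand on $(S_t, S_{k+1})$: since $r(S_{k+1})$ is measurable with respect to this pair it pulls out, and $E_{\tau|s_0}\!\left[\nabla_\theta\log\pi_\theta(A_t|S_t)\mid S_t,S_{k+1}\right] = \sum_a P(A_t=a\mid S_t,S_{k+1})\nabla_\theta\log\pi_\theta(a|S_t) = \sum_a h(a|S_t,S_{k+1})\nabla_\theta\log\pi_\theta(a|S_t)$, because the conditional law of $A_t$ given $(S_t,S_{k+1})$ is exactly $h(\cdot|S_t,S_{k+1})$ (and $k+1>t$, so this lies in the range where $h$ is defined). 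Taking the outer expectation back and reassembling with $\gamma^k=\gamma^t\gamma^{k-t}$, then pulling $\sum_a\nabla_\theta\log\pi_\theta(a|S_t)$ outside the sum over $k$, gives exactly the claimed expression.

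The one genuinely load-bearing step is recognizing that the conditioning must be on $S_{k+1}$, not $S_k$: this is precisely where the assumption that $r$ is a function of the next state enters, since it is what makes $R_k$ measurable with respect to $(S_t,S_{k+1})$ and hence lets the $k=t$ immediate-reward term — which in Equation \eqref{eq:hca_update} required the auxiliary model $\hat r$ because $h(a|S_t,S_t)=\pi_\theta(a|S_t)$ is uninformative — be credited on the same footing as every later reward. The remaining obstacles are routine: justifying the interchange of expectation and the infinite sums, and checking that $h(a|s_t,s_{k+1})$ is the true conditional of $A_t$ under the $\pi_\theta$-trajectory law (immediate from the Markov structure of the MDP), so the tower-property step is legitimate.
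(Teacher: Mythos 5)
Your proof is correct, and it is essentially the paper's argument read in the dual direction. The paper starts from the all-actions form of the policy gradient theorem, $\sum_a \nabla_\theta\pi_\theta(a|S_t)Q(S_t,a)$, expands $Q(s_t,a)=\sum_{k\geq t}\sum_s\gamma^{k-t}P(S_{k+1}=s\mid A_t=a,S_t=s_t)r(s)$ using the hypothesis $r=r(s_{t+1})$, and applies Bayes' rule to turn $P(S_{k+1}\mid A_t,S_t)$ into $h(a|s_t,s)P(S_{k+1}=s\mid S_t=s_t)/\pi_\theta(a|s_t)$, with the denominator cancelling against $\nabla_\theta\pi_\theta=\pi_\theta\nabla_\theta\log\pi_\theta$; you instead start from the sampled (REINFORCE) form and replace the score $\nabla_\theta\log\pi_\theta(A_t|S_t)$ by its conditional expectation given $(S_t,S_{k+1})$, which is the same Bayes'-rule identity applied to the other factor. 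You correctly isolate the load-bearing point — that $r=r(s_{t+1})$ is what makes $R_k$ measurable with respect to $(S_t,S_{k+1})$, so the $k=t$ reward is credited by $h(a|S_t,S_{t+1})$ on the same footing as later rewards and no auxiliary model $\hat r$ is needed — which is precisely where the assumption enters in step (a) of the paper's derivation. Two minor caveats, both shared with (or glossed over by) the paper: the conditional law $P(A_t=a\mid S_t,S_{k+1})$ depends on the time gap $k+1-t$, so $h$ should strictly carry that index, and the interchange of expectation with the infinite sums (your Fubini step) is a regularity condition the paper does not state at all.
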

\begin{proof}
From the Policy Gradient theorem \cite{sutton1999policy} we have
\begin{equation*}
    \nabla_\theta V^{\pi_\theta}(s_0)
     = E_{\tau|s_0}\left[ \sum_{t\geq0} \gamma^t\sum_a \nabla_\theta\pi_\theta(a|S_t) Q(S_t,a)\right],
\end{equation*}
where $Q(s_t,a)\defeq E_{\tau|s_t,a}[G_t]=E_{\tau|s_t,a}\left[\sum_{k\geq t}\gamma^{k-t} R_k\right]$.

Let's express the $Q$-function in terms of probability $h(a|s_t,s_k)$:
\begin{equation*}
    \begin{split}
        Q(s_t,a) & \stackrel{\text{(a)}}{=} \sum_{k\geq t}\sum_{s\in\mathcal{X}} \gamma^{k-t} P(S_{k+1}=s|A_t=a,S_t=s_t)r(s)\\
        & \stackrel{\text{(b)}}{=} \sum_{k\geq t}\sum_{s\in\mathcal{X}} \gamma^{k-t}\frac{P(A_t=a|S_{k+1}=s,S_t=s_t)P(S_{k+1}=s|S_t=s_t)}{P(A_t=a|S_t=s_t)}r(s)\\
        & = \sum_{k\geq t}\sum_{s\in\mathcal{X}} \gamma^{k-t}P(S_{k+1}=s|S_t=s_t)\frac{h(a|s_t,s)}{\pi_\theta(a|s_t)}r(s)\\
        & = E_{\tau|s_t}\left[\sum_{k\geq t}\gamma^{k-t}\frac{h(a|S_t,S_{k+1})}{\pi_\theta(a|S_t)}R_k\right],
    \end{split}
\end{equation*}
where (a) follows from the definition of $Q(s_t,a)$ and the assumption $r=r(s_{t+1})$ and (b) follows from Bayes' rule.

We can then plug the expression above into the Policy Gradient theorem and obtain:
\begin{equation*}
    \begin{split}
    \nabla_\theta V^{\pi_\theta}(s_0)
     & = E_{\tau|s_0}\left[ \sum_{t\geq0} \gamma^t\sum_a \nabla_\theta\pi_\theta(a|S_t) E_{\tau|S_t}\left[\sum_{k\geq t}\gamma^{k-t}\frac{h(a|S_t,S_{k+1})}{\pi_\theta(a|S_t)}R_k\right]\right] \\
     & = E_{\tau|s_0}\left[ \sum_{t\geq0} \gamma^t E_{\tau|S_t}\left[\sum_a \nabla_\theta\pi_\theta(a|S_t) \sum_{k\geq t}\gamma^{k-t}\frac{h(a|S_t,S_{k+1})}{\pi_\theta(a|S_t)}R_k\right]\right] \\
     & = E_{\tau|s_0}\left[ \sum_{t\geq0} \gamma^t E_{\tau|S_t}\left[\sum_a \nabla_\theta\log\pi_\theta(a|S_t) \sum_{k\geq t}\gamma^{k-t}h(a|S_t,S_{k+1})R_k\right]\right] \\
     & = E_{\tau|s_0}\left[ \sum_{t\geq0} \gamma^t \sum_a \nabla_\theta\log\pi_\theta(a|S_t) \sum_{k\geq t}\gamma^{k-t}h(a|S_t,S_{k+1})R_k\right],
    \end{split}
\end{equation*}
where the last equality comes from applying iterated expectations.
\end{proof}

The assumption $r=r(s_{t+1})$ holds in Atari games where obtained reward is visualized for the player (via special animation or a score visible on screen), and thus each reward has a corresponding state. This assumption also holds for many other applications of interest, such as robotics, where specifying a rewarding goal state is common.

In the most general case when reward is a function of transition tuple (state, action, next state), we can obtain a similar expression for policy gradient by conditioning hindsight probability on the whole transition, which effectively reduces to conditioning on reward.

\begin{theorem}
\label{th:r_general}
Assume an MDP $(\mathcal{S}, \mathcal{A}, p, r, \gamma)$ where for any action $a$, state $s_t$ and next state $s_{t+1}\sim p(\cdot|s_t,a)$ reward is a function of transition $r=r(s_t,a,s_{t+1})$. For any two states $s_t$ and $s_k$ such that $k\geq t$ and any action $a$ let $h(a|s_t,s_k,a_k,s_{k+1})$ be the conditional probability $P(A_t=a|S_t=s_t,S_k=s_k,A_k=a_k,S_{k+1}=s_{k+1})$ over trajectories sampled from policy $\pi_\theta$. Then the gradient of the value function at some state $s_0$ is:
\begin{equation*}
    \nabla_\theta V^{\pi_\theta}(s_0)
     = E_{\tau|s_0}\left[ \sum_{t\geq0} \gamma^t\sum_a \nabla_\theta\log\pi_\theta(a|S_t)
        \sum_{k\geq t}\gamma^{k-t}h(a|S_t,S_k,A_k,S_{k+1})R_k
    \right]
\end{equation*}
\end{theorem}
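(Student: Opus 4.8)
The plan is to mirror the proof of Theorem~\ref{th:r_next_state} almost verbatim, the only structural change being that the sum over the single ``next state'' $S_{k+1}$ is replaced by a sum over the full transition tuple $(S_k, A_k, S_{k+1})$, since this is what the reward $r=r(s_t,a,s_{t+1})$ now depends on. First I would start from the Policy Gradient theorem, writing $\nabla_\theta V^{\pi_\theta}(s_0) = E_{\tau|s_0}[\sum_{t\geq 0}\gamma^t \sum_a \nabla_\theta \pi_\theta(a|S_t) Q(S_t,a)]$ with $Q(s_t,a)=E_{\tau|s_t,a}[\sum_{k\geq t}\gamma^{k-t}R_k]$, and then re-express $Q$ in terms of $h$.

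For the key computation, using $R_k=r(S_k,A_k,S_{k+1})$ I would expand $E_{\tau|s_t,a}[R_k]=\sum_{(s,a',s')} P(S_k=s,A_k=a',S_{k+1}=s'\,|\,S_t=s_t,A_t=a)\,r(s,a',s')$ and then apply Bayes' rule in the form $P(S_k=s,A_k=a',S_{k+1}=s'\,|\,S_t=s_t,A_t=a) = \frac{h(a|s_t,s,a',s')}{\pi_\theta(a|s_t)}\,P(S_k=s,A_k=a',S_{k+1}=s'\,|\,S_t=s_t)$. Re-folding the sum into an expectation gives $Q(s_t,a)=E_{\tau|s_t}[\sum_{k\geq t}\gamma^{k-t}\frac{h(a|S_t,S_k,A_k,S_{k+1})}{\pi_\theta(a|S_t)}R_k]$. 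Plugging this back into the Policy Gradient theorem, converting $\nabla_\theta\pi_\theta(a|S_t)$ to $\pi_\theta(a|S_t)\nabla_\theta\log\pi_\theta(a|S_t)$ to cancel the $\pi_\theta(a|S_t)$ in the denominator, and applying iterated expectations to collapse the nested $E_{\tau|s_0}[\cdots E_{\tau|S_t}[\cdots]]$ into a single expectation over $\tau|s_0$, then yields the claimed identity, exactly as in the proof of Theorem~\ref{th:r_next_state}.

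I expect the main obstacle --- really the only point requiring care beyond bookkeeping --- to be the diagonal term $k=t$, where the hindsight probability becomes $h(a|S_t,S_t,A_t,S_{t+1})$; because it is conditioned on the action $A_t$ actually taken, it degenerates to the indicator $[a=A_t]$. I would note that this is both consistent --- it makes the immediate-reward contribution coincide with the usual REINFORCE-style crediting of the sampled action --- and harmless, since the Bayes'-rule step remains valid as long as $\pi_\theta(a|S_t)>0$, the same positivity and well-definedness caveat already implicit in Theorem~\ref{th:r_next_state}. It is also worth remarking that, unlike the state-only case, the argument invokes no Markov property anywhere: every manipulation is just the chain rule for conditional probabilities, so conditioning $h$ on the whole transition (equivalently, on $R_k$ together with its surrounding context) is precisely what restores the identity when rewards depend on $(s,a,s')$.
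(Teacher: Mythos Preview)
Your proposal is correct and matches the paper's proof essentially line for line: start from the Policy Gradient theorem, expand $Q(s_t,a)$ as a sum over the transition tuple $(S_k,A_k,S_{k+1})$, apply Bayes' rule to introduce $h/\pi_\theta$, fold back into an expectation, and finish with the log-gradient trick plus iterated expectations. Your remarks on the degenerate $k=t$ case and on not needing the Markov property are extra observations the paper omits, but the core argument is the same.
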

\begin{proof}
Analogously to Theorem \ref{th:r_next_state} we start by expressing the $Q$-function in terms of the hindsight probability $h$ using the definition of a $Q$-function and applying Bayes' rule:
\begin{equation*}
    \begin{split}
        Q(s_t,a_t) & = \sum_{k\geq t}\sum_{s\in\mathcal{X}}\sum_{a\in\mathcal{A}}\sum_{s'\in\mathcal{X}} \gamma^{k-t} P(S_k=s,A_k=a,S_{k+1}=s'|A_t=a_t,S_t=s_t)r(s,a,s')\\
        & = \sum_{k\geq t}\sum_{s\in\mathcal{X}}\sum_{a\in\mathcal{A}}\sum_{s'\in\mathcal{X}} \gamma^{k-t}P(S_k=s,A_k=a,S_{k+1}=s'|S_t=s_t)\frac{h(a_t|s_t,s,a,s')}{\pi_\theta(a_t|s_t)}r(s,a,s')\\
        & = E_{\tau|s_t}\left[\sum_{k\geq t}\gamma^{k-t}\frac{h(a_t|S_t,S_k,A_k,S_{k+1})}{\pi_\theta(a_t|s_t)}R_k\right]
    \end{split}
\end{equation*}

We then plug the obtained expression for the $Q$-function into the Policy Gradient theorem and complete the proof analogously to Theorem \ref{th:r_next_state}.
\end{proof}

Theorem \ref{th:r_general} provides a unified way to credit every reward using hindsight probability without the need for special treatment of immediate rewards. This motivates a simplified practical implementation of hindsight-style credit without a learned immediate reward model.

\subsection{What types of games does Deep HCA-Value perform well on?}
\label{subsec:atari_results_explained}

Let's look at the games where Deep HCA-Value outperformed A2C (See Figure \ref{fig:atari_training_curves}). Qualitatively, these games share a number of common elements--
They are mostly (BeamRider, StarGunner, Carnival, SeaQuest, etc) ``shooting gallery'' type games where the agent must move and shoot at multiple targets, with delayed rewards due to the travel time of the agent's bullets. Some games where Deep HCA-Value moderately outperforms, such as Frostbite and RoadRunner, aren't shooting galleries but do require balancing short term rewards (jumping to a new platform in Frostbite) and long term rewards (Surviving to return to shore to make additional excursions) and thus have overlapping credit intervals. Curiously, games where Deep HCA-Value underperforms A2C or is comparable tend to have very long delay periods (Pong) or very short ones (Kangaroo).

Two of the games where Deep HCA-Value trains very slowly or requires clipping, Breakout and DemonAttack, are particularly interesting. Both feature pure black backgrounds and simple 1D controls, with very few reward-informative pixels (a tiny ball and mostly static wall, tiny bullets and small enemies). Notably, DemonAttack is a shooting gallery type game, which we would expect Deep HCA-Value to do well on given other games of the type.

Based on the evidence in Section \ref{subsec:what_has_credit_learned}, we know that severe mispredictions degrade performance in both DemonAttack and (especially) Breakout. We hypothesize that this is in both cases due to the classifier learning to focus on the player's avatar (paddle or laser cannon) as the most informative feature for predicting actions. As this avatar is directly controlled by the agent, it is easy to predict which action moves it to a given location. We suspect that the credit classifier overfits to assign credit to actions that move the avatar to its future reward-correlated position (ignoring reward-relevant features) rather than crediting actions that result in the entire observed rewarding state (including signifiers of reward).

Of the games tested, Deep HCA-Value failed to train (with or without clipping) on two games where A2C learned a non-random policy: Asteroids and Atlantis. Asteroids features a complex action space and visually diverse states, which likely make credit classifier learning difficult, while Atlantis has very rare rewards with very long delay periods, which likewise make credit hard to learn. As the credit classifier must learn something before the policy can improve, this result fits the hypothesis that these games represent particularly hard cases for credit assignment.

\end{document}